\documentclass[conference]{IEEEtran}
\IEEEoverridecommandlockouts
\usepackage{cite}
\usepackage{amsmath,amssymb,amsfonts}
\usepackage{graphicx}
\usepackage{textcomp}
\usepackage{xcolor}
\usepackage{amsthm}
\usepackage{times}
\usepackage{xcolor}
\usepackage{soul}
\usepackage[small]{caption}
\usepackage{multicol}
\usepackage{subfigure}
\usepackage{enumerate}
\usepackage{amsmath,amssymb,amstext}
\usepackage{graphicx}
\usepackage{float}
\usepackage{algorithm}
\usepackage{multirow}
\usepackage{booktabs}
\usepackage{algpseudocode}
\usepackage{bm}
\usepackage{tabularx}

\newtheorem{theorem}{Theorem}

\newtheorem{lemma}[theorem]{Lemma}

\newtheorem{definition}{Definition}

\newtheorem{assumption}[theorem]{Assumption}

\,
\,

\newcommand{\be}{\begin{equation}}
\newcommand{\ee}{\end{equation}}
\newcommand{\ba}{\begin{array}}
	\newcommand{\ea}{\end{array}}
\newcommand{\bpm}{\begin{pmatrix}}
	\newcommand{\epm}{\end{pmatrix}}

\DeclareMathOperator*{\argmin}{arg\,min}
\newcommand{\etal}{\textit{et al.}}

\def\BibTeX{{\rm B\kern-.05em{\sc i\kern-.025em b}\kern-.08em
    T\kern-.1667em\lower.7ex\hbox{E}\kern-.125emX}}
\begin{document}
\bibliographystyle{IEEEtran}

\title{An Efficient ADMM-Based Algorithm to Nonconvex Penalized Support Vector Machines
}

\author{\IEEEauthorblockN{Lei Guan$^{\dag}$, Linbo Qiao$^{\dag}$, Dongsheng Li$^{\dag}$, Tao Sun$^{\ddag}$, Keshi Ge$^{\dag}$, Xicheng Lu$^{\dag}$}
\IEEEauthorblockA{\textit{$\dag$College of Computer}, \textit{$\ddag$Department of Mathematics} \\
\textit{National University of Defense Technology}\\
Changsha, China \\
\{guanleics,linboqiao\}@gmail.com,\, dsli@nudt.edu.cn,\, nudtsuntao@163.com,\, \{gekeshi,xclu\}@nudt.edu.cn}
}

\maketitle

\begin{abstract}
Support vector machines (SVMs) with sparsity-inducing nonconvex penalties have received considerable attentions for the characteristics of automatic classification and variable selection.
However, it is quite challenging to solve the nonconvex penalized SVMs due to their nondifferentiability, nonsmoothness and nonconvexity.
In this paper, we propose an efficient ADMM-based algorithm to the nonconvex penalized SVMs.
The proposed algorithm covers a large class of commonly used nonconvex regularization terms including the smooth clipped absolute deviation (SCAD) penalty, minimax concave penalty (MCP), log-sum penalty (LSP) and capped-$\ell_1$ penalty.
The computational complexity analysis shows that the proposed algorithm enjoys low computational cost.
Moreover, the convergence  of the proposed algorithm is guaranteed.
Extensive experimental evaluations on five benchmark datasets demonstrate the superior performance of the proposed algorithm to other three state-of-the-art approaches.
\end{abstract}

\begin{IEEEkeywords}
nonconvex penalty, support vector machine, linear classification, sparse, ADMM
\end{IEEEkeywords}

\section{Introduction}


It is well-known that SVMs can perform automatic variable selection by adding a sparsity-inducing penalty (regularizer) to the loss function \cite{zhu20041,zhangvariable}. Typically, the sparsity-inducing penalties can be divided into two catagories: convex penalty and nonconvex penalty. The $\ell_1$ penalty is the most famous convex penalty and has been widely used for variable selection \cite{tibshirani1996regression,zhu20041}.
Commonly used nonconvex penalties include $\ell_p$ penalty with $0<p<1$, smooth clipped absolute deviation (SCAD) penalty \cite{fan2001variable}, log penalty \cite{mazumder2011sparsenet}, minimax concave penalty (MCP) \cite{zhang2010nearly}, log-sum penalty (LSP) \cite{candes2008enhancing}, and capped-$\ell_1$ penalty \cite{zhang2010analysis}. It has been shown in literatures that nonconvex penalties outperform  the convex  ones with  better statistics properties \cite{yao2015fast};
theoretically, SVMs with elaborately designed nonconvex penalties can asymptotically unbiasedly estimate the large nonzero parameters as well as shrink the estimates of zero valued parameters to zero\cite{Liu2010Sparse}.
Consequently, the nonconvex penalized SVMs conduct variable selection and classification simultaneously; and they are more robust to the outliers and are able to yield a compact classifier with high accuracy.

Although nonconvex penalized SVMs are appealing, it is rather hard to optimize due to the nondifferentiability of the hinge loss function and the nonconvexity and nonsmoothness introduced by the nonconvex regularization term. Existing solutions to nonconvex penalized SVMs \cite{zhang2005gene,zhangvariable} are pretty computationally inefficient, and they are limited to a few of nonconvex penalties. Besides that, other popular approaches are unable to apply to the nonconvex penalized hinge loss function since they typically require the loss function to be differentiable \cite{gong2013a,jiang2016structured}.

In this paper, we focus on solving the standard support vector machines with a general class of nonconvex penalties including the SCAD penalty, MCP, LSP and capped-$\ell_1$ penalty.
Mathematically, given a train set $\mathcal{S}=({\bf x}_i, y_i)_{i=1}^n$, where ${\bf x}_i \in\mathbb{R}^{d}$ and $y_i\in\{-1,1\}$,
 the nonconvex penalized SVMs minimize the following penalized hinge loss function:
\begin{equation}
\label{eq1}
\min_{\{{\bf w},b\}} \ \ \frac{1}{n}\sum_{i=1}^{n}[1-y_i({\bf w}^\top {\bf x}_i+b)]_++P({\bf w}),
\end{equation}
where the $\{{\bf w},b\}$ pair is the decision variable with ${\bf w}\in\mathbb{R}^{d}$ and $b\in\mathbb{R}$.
$P({\bf w})=\sum_{j=1}^{d}p_\lambda(w_j)$ is the regularization term with a tunning parameter $\lambda$, and $p_\lambda(w_j)$ is one of the nonconvex regularizers listed in Table \ref{Tab:Penalty}. Here and throughout this paper, $[a]_+$ represents $\text{max}(a,0)$; and $(\cdot)^\top$ denotes the transposition of matrices and column vectors.

\begin{table}[htbp]
	\centering
	\caption{Example nonconvex regularizers. Here, $\theta>2$ for SCAD regularizer and $\theta>0$ for other regularizers.
	}
	\label{Tab:Penalty}
	\begin{tabular}{|l|l|} \hline
		Name & $p_{\lambda}(w_j)$ \\ \hline
		LSP & $\lambda\log\left(1+ |w_j|/\theta\right) $ \\ \hline
		SCAD &  $\left\{\begin{array}{ll}
		\lambda|w_j|, & \text{if} \ |w_j|\leq\lambda, \\
		\frac{-w_j^2+2\theta\lambda|w_j|-\lambda^2}{2(\theta-1)}, & \text{if} \ \lambda<|w_j|\leq\theta\lambda, \\
		\frac{(\theta+1)\lambda^2}{2}, & \text{if} \ |w_j|>\theta\lambda. \\
		\end{array}\right.$ \\ \hline
		MCP & $\left\{\begin{array}{ll}
		\lambda|w_j| - w_j^2/(2\theta), & \text{if} \ |w_j|\leq\theta\lambda, \\
		\theta\lambda^2/2, & \text{if} \ |w_j|>\theta\lambda. \\
		\end{array}\right.$ \\ \hline
		Capped-$\ell_1$ & $\lambda\min\left( |w_j|, \theta\right) $ \\ \hline
	\end{tabular}
\end{table}

To address problem \eqref{eq1}, we propose an efficient algorithm by incorporating the framework of
 alternating direction method of multipliers (ADMM) \cite{boyd2011distributed}. The main contributions of this paper can be summarized as follows.
 \begin{enumerate}[a)]
 \item We find that by reasonably reformulating problem \eqref{eq1} and applying the framework of ADMM, nonconvex penalized SVMs can be solved by optimizing a series of subproblems. In addition, each subproblem owns  a closed-form solution and is easy to solve.
\item More importantly, we find the main computational burden of the ADMM procedure lies in the update of ${\bf w}$ which requires to calculate the inversion of a $d \times d$ matrix.
It costs $O(d^3)$ flops (floating point operations) when the order of $d$ is bigger than $n$.
We propose an efficient scheme to update ${\bf w}$ via using the Sherman-Morrison formula \cite{sherman1950adjustment} and Cholesky factorization, attaining an improvement by a factor of $O(d/n)^2$ over the naive method in this case. Furthermore, we optimize the iteration scheme so that the computationally expensive part is calculated only once.
\item We present detailed computational complexity analysis and show that the optimized algorithm is pretty computationally efficient.
\item In addition, we also present detailed convergence analysis of the proposed ADMM algorithm.
\item Extensive experimental evaluations on five LIBSVM benchmark datasets demonstrate the outstanding performance of the proposed algorithm. In comparison with other three state-of-the-art algorithms, the proposed algorithm runs faster as well as attains high prediction accuracy.
\end{enumerate}

The rest of this paper is organized as follows. Section \ref{relatedwork} reviews the related work. Section \ref{algorithm} presents the derivation procedure and studies the computational complexity of the proposed algorithm. Section \ref{convergenceanalysis} shows the convergence analysis. Section \ref{experimentalresults} details and discusses the experimental results. Finally, we conclude this paper in Section \ref{conclusions}.

\section{Related Work}\label{relatedwork}
Lots of studies have been devoted to  the nonconvex penalized SVMs due to their superior performance in various applications arising from academic community and industry.
Liu \etal \cite{Liu2010Sparse} developed an $\ell_p$-norm penalized SVM with nonconvex penalty $\ell_p$-norm ($0<p<1$) based on margin maximization and $\ell_p$ approximation.
Zhang \etal \cite{zhang2005gene} combined SVM with smoothly clipped absolute deviation (SCAD) penalty, and obtained a compact classifier with high accuracy.
In order to efficiently solve SCAD-penalized SVM, they proposed  a successive quadratic algorithm (SQA) which converted the non-differentiable and non-convex optimization problem into an easily solved linear equation system.
Zhang \etal \cite{zhangvariable} established a unified theory for SCAD- and MCP-penalized SVM in the high-dimensional setting.
Laporte \etal \cite{laporte2014nonconvex} proposed a general framework for feature selection in learning to rank using SVM with nonconvex regularizations such as log penalty, MCP and $\ell_p$ norm with $0<p<1$.
Recently, Zhang \etal \cite{Zhang2016Variable} have established an unified theory for a general class of nonconvex penalized SVMs in the high-dimensional setting.
Liu \etal \cite{liu2016global} showed that a class of nonconvex learning problems are equivalent to general quadratic programmings and proposed a reformulation-base technique named mixed integer programming-based global optimization (MIPGO).

Apart from previous work discussed above,  many researches concerned with optimization problems with a general class of nonconvex regularizations \cite{gong2013a,hong2016convergence,wang2015global,jiang2016structured} are developed.
Nevertheless, these proposed methods cannot be applied to solve the optimization problem studied in this paper.
In \cite{hong2016convergence}, Hong \etal\quad analyzed the convergence of the ADMM for solving certain nonconvex consensus and sharing problems. However, they require the nonconvex regularization term to be smooth, which violates the nonsmooth trait of the penalty functions considered in this paper.
Later, Wang \etal \cite{wang2015global} analyzed the convergence of ADMM for minimizing a nonconvex and possibly nonsmooth objective function with coupled linear  constraints.
Unfortunately, their analysis cannot be applied to the nonconvex penalized hinge loss function since they require the objective to be differentiable.
Gong \etal \cite{gong2013a} proposed General Iterative Shrinkage and Thresholding (GIST) algorithm to solve the nonconvex optimization problem for a large class of nonconvex penalties.
Recently, Jiang \etal \cite{jiang2016structured} have proposed two proximal-type variants of the ADMM to solve the structured nonconvex and nonsmooth problems.
Nevertheless, the algorithms proposed in \cite{gong2013a} and \cite{jiang2016structured} are unable to solve the nonconvex penalized hinge loss function because they both require the loss function to be differentiable as well.

\section{Algorithm For Nonconvex Penalized SVMs}\label{algorithm}
In this section, we derive the solution of nonconvex penalized SVMs based on the framework of  ADMM \cite{boyd2011distributed}.
By introducing auxiliary variables and reformulating the original optimization problem, the nonconvex penalized SVMs can be solved by iterating a series of subproblems with closed-form solutions. Moreover, detailed computational complexity analysis of the proposed algorithm is presented in this section.

\subsection{Derivation Procedure}\label{AA}
In order to apply the framework of ADMM, we first introduce auxiliary variables to handl the nondifferentiability of problem (\ref{eq1}).

Let ${\bf X}=[{\bf x}_1,\cdots,{\bf x}_n]^\top$ (${\bf X}\in\mathbb{R}^{n\times d}$) and ${\bf y}=[y_1,\cdots,y_n]^\top$.
Then the unconstrained problem~\eqref{eq1} can be rewritten as an equivalent form
\begin{equation}
\label{eq2}
\begin{array}{cl}
\displaystyle\min_{\{{\bf w},b,\bm{\xi}\}} & \frac{1}{n}{\bf 1}^\top\bm{\xi} + P({\bf w}), \\
\textnormal{s.t.}& {\bf Y}({\bf Xw}+b{\bf 1}) \succeq {\bf 1}-\bm{\xi}, \\
& \bm{\xi} \succeq {\bf 0},  \\
\end{array}
\end{equation}
where $\bm{\xi}=(\xi_1,\cdots,\xi_n)^\top$ and ${\bf Y}$ is a $n\times n$ diagonal matrix with $y_i$ on the $i$th diagonal element, i.e., ${\bf Y}=\text{diag}\{\bf y\}$. In what follows, ${\bf 1}$ is an $n$-column vector of $1$s, ${\bf 0}$ is an $n$-column vector of $0$s, and $\succeq$ denotes element-wised $\geqq$.

Note that, using variable splitting
and introducing another slack variable ${\bf s}$,
 problem~\eqref{eq2} can be converted to following equivalent constrained problem:
\begin{equation}
\label{eq3}
\begin{array}{cl}
\displaystyle\min_{\{{\bf w},b,\bm{\xi},\bm{s},\bm{z}\}} & \frac{1}{n}{\bf 1}^\top\bm{\xi} + P({\bf z}), \\
\textnormal{s.t.}&{\bf w}={\bf z}, \\
&  {\bf Y}({\bf Xw}+b{\bf 1}) + {\bm \xi} = {\bf s}+{\bf 1},\\
& \bm{\xi} \succeq {\bf 0},{\bf s} \succeq {\bf 0},
\end{array}
\end{equation}
where ${\bf z}=(z_1,\cdots,z_d)^\top$ and ${\bf s}=(s_1,\cdots,s_n)^\top$.

Hence, the corresponding surrogate Lagrangian function of~\eqref{eq3} is
\begin{equation}
\label{eqLag}
\begin{array}{ccl}
&&\mathcal{L}_0({\bf w},b,{\bf z},\bm{\xi},{\bf s},{\bm \gamma},{\bm \tau})\\
&=& \frac{1}{n}\textbf{1}^\top\bm{\xi} + P_\lambda({\bf z})+ <{\bm \gamma},({\bf w}-{\bf z})>  \\
&& +<{\bm \tau}, {\bf Y}({\bf X}{\bf w}+b{\bf 1})+\bm{\xi}-{\bf s}-{\bf 1}>,
\end{array}
\end{equation}
where ${\bm \gamma}\in\mathbb{R}^{d}$ and ${\bm \tau}\in\mathbb{R}^{n}$ are the dual variables corresponding to the first and second linear constraints of~\eqref{eq3}, respectively. $<\cdot,\cdot>$ represents the standard inner product in Euclidean space.
Note that we call $\mathcal{L}_0({\bf w},b,{\bf z},\bm{\xi},{\bf s},{\bf u},{\bf v})$ as ``surrogate Lagrangian function" since it does not involve the set of constraints $\{\bm{\xi} \succeq {\bf 0}, {\bf s} \succeq {\bf 0}\}$.
The projections onto these two simple linear constraints can be easily calculated by basic algebra computations and projections to the  1-dimensional  nonnegative set ($[0,+\infty)$).

Let ${\bf H}={\bf YX}$ and note that ${\bf y}={\bf Y}{\bf 1}$, thus the scaled-form surrogate augmented Lagrangian function can be written as
\begin{equation}
\label{eq4}
\begin{array}{ccl}
&&\mathcal{L}({\bf w},b,{\bf z},\bm{\xi},{\bf s},{\bf u},{\bf v})\\
&=&\mathcal{L}_0({\bf w},b,{\bf z},\bm{\xi},{\bf s},{\bm \gamma},{\bm \tau})+\frac{\rho_1}{2}||{\bf w-z}||_2^2  \\
&&+ \frac{\rho_2}{2}||{\bf Hw}+b{\bf y}+\bm{\xi}-{\bf s}-{\bf 1}||_2^2
\\
&=& \frac{1}{n}\textbf{1}^\top\bm{\xi} + P({\bf z}) + \frac{\rho_1}{2}||{\bf w-z+u}||_2^2 \\
&&+ \frac{\rho_2}{2}||{\bf Hw}+b{\bf y}+\bm{\xi}-{\bf s}-{\bf 1}+{\bf v}||_2^2+\text{constant},
\end{array}
\end{equation}
where ${\bf u}={\bm \gamma}/\rho_1$ and ${\bf v}={\bm \tau}/\rho_2$ are the scaled dual variables. The constants $\rho_1$ and $\rho_2$ are penalty parameters with $\rho_1>0$ and $\rho_2>0$.

The resulting ADMM procedure starts with an iterate ${\bf w}^{(0)},b^{(0)},{\bf z}^{(0)},\bm{\xi}^{(0)},{\bf s}^{(0)},{\bf u}^{(0)},{\bf v}^{(0)}$; and for each iteration count $k=0,1,2,\cdots$, the scaled-form ADMM procedure can be expressed as
\begin{align}
&{\bf w}^{(k+1)}=\argmin_{\bf w}\ \mathcal{L}({\bf w}, b^{(k)}, {\bf z}^{(k)},  \bm{\xi}^{(k)}, {\bf s}^{(k)}, {\bf u}^{(k)}, {\bf v}^{(k)}),\label{eq5}\\
&b^{(k+1)}=\argmin_b\ \mathcal{L}({\bf w}^{(k+1)}, b, {\bf z}^{(k)}, \bm{\xi}^{(k)}, {\bf s}^{(k)}, {\bf u}^{(k)}, {\bf v}^{(k)}),\label{eq6}\\
&{\bf z}^{(k+1)}=\argmin_{\bf z}\ \mathcal{L}({\bf w}^{(k+1)}, b^{(k+1)}, {\bf z}, \bm{\xi}^{(k)}, {\bf s}^{(k)}, {\bf u}^{(k)}, {\bf v}^{(k)})\nonumber\\
\label{eq7}\\
&\bm{\xi}^{(k+1)}=\argmin_{\bm{\xi} \succeq \textbf{0}}\ \mathcal{L}({\bf w}^{(k+1)}, b^{(k+1)}, {\bf z}^{(k+1)}, \bm{\xi}, {\bf s}^{(k)}, {\bf u}^{(k)}, {\bf v}^{(k)}),\label{eq8}\\
&{\bf s}^{(k+1)}=\argmin_{{\bf s} \succeq \textbf{0}}\ \mathcal{L}({\bf w}^{(k+1)}, b^{(k+1)}, {\bf z}^{(k+1)}, \bm{\xi}^{(k+1)}, {\bf s}, {\bf u}^{(k)}, {\bf v}^{(k)}),\label{eq9}\\
&{\bf u}^{(k+1)}={\bf u}^{(k)}+({\bf w}^{(k+1)}-{\bf z}^{(k+1)}),\label{eq10}\\
&{\bf v} ^{(k+1)}={\bf v}^{(k)}+(\bm{\xi}^{(k+1)}-{\bf s}^{(k+1)}+{\bf Hw}^{(k+1)}+b{\bf y}-{\bf 1}).\nonumber\\\label{eq11}
\end{align}


Considering optimizing problem~\eqref{eq5}, we can obtain the closed-form solution of it via $\partial{\mathcal{L}}/\partial{{\bf w}}={\bf 0}$, that is,
\begin{equation}
\label{wsolution}
\begin{split}
{\bf w}^{(k+1)}=(\rho_1{\bf I}_{d}+
\rho_2{\bf H}^\top {\bf H})^{-1}[\rho_1({\bf z}^{(k)}-{\bf u}^{(k)})+ \\   \rho_2{\bf H}^\top({\bf s}^{(k)}+{\bf 1}-\bm{\xi}^{(k)}-{\bf v}^{(k)}-b^{(k)}{\bf y})],
\end{split}
\end{equation}
where ${\bf I}_{d}$ denotes the $d\times d$ identity matrix.

Note that~\eqref{wsolution} requires to calculate the inversion of a ${d \times d}$ matrix. The computational cost is especially high for the large $d$ case.
Therefore, we further investigate an efficient solution for the update of ${\bf w}$ according to the value of $n$ and $d$.

Let $\rho=\rho_1/\rho_2$ and ${\bf f}^{(k)}=\rho({\bf z}^{(k)}-{\bf u}^{(k)})+{\bf H}^\top({\bf s}^{(k)}+{\bf 1}-\bm{\xi}^{(k)}-{\bf v}^{(k)}-b^{(k)}{\bf y})$, then Equation~\eqref{wsolution} can be equivalently converted to
\begin{equation}
\label{wsolution1}
\begin{split}
{\bf w}^{(k+1)}=(\rho{\bf I}_d+ {\bf H}^\top {\bf H})^{-1}{\bf f}^{(k)}.
\end{split}
\end{equation}

If $d$ is more than $n$ in order, we can apply the Sherman-Morrison formula \cite{sherman1950adjustment} to solve (\ref{wsolution1}). Therefore, we have
\begin{equation}
\label{wsolution12}
\begin{split}
{\bf w}^{(k+1)}=\frac{{\bf f}^{(k)}}{\rho}-\frac{({\bf H}^\top({\bf U}^{-1}({\bf L}^{-1}({\bf H}{\bf f}^{(k)}))))}{\rho^{2}},
\end{split}
\end{equation}
where ${\bf L}$ and ${\bf U}$ are the Cholesky factorization of a $n\times n$ positive definite matrix ${\bf I}_n+ \frac{1}{\rho}{\bf H}{\bf H}^\top$, i.e., ${\bf I}_n+ \frac{1}{\rho}{\bf H} {\bf H}^\top={\bf LU}$. Here, ${\bf I}_n$ is the $n\times n$ identity matrix.

For the case when $n\geq d$, observe that the matrix $\rho{\bf I}_d+ {\bf H}^\top {\bf H}$ is positive definite, then we can obtain the solution of ${\bf w}^{(k+1)}$ via
\begin{equation}
\label{wsolution123}
\begin{split}
{\bf w}^{(k+1)}={\bf U}^{-1}({\bf L}^{-1}{\bf f}^{(k)}),
\end{split}
\end{equation}
 where ${\bf L}$ and ${\bf U}$ are the Cholesky factorization of a $d\times d$ matrix $\rho{\bf I}_d+ {\bf H}^\top {\bf H}$, i.e., $\rho{\bf I}_d+ {\bf H}^\top {\bf H}={\bf LU}$.



Consequently, equation (\ref{wsolution}) can be equivalently converted to
\begin{equation}
\label{wsolution2}
{\bf w}^{(k+1)}=\left\{\begin{array}{ll}
{\bf U}^{-1}({\bf L}^{-1}{\bf f}^{(k)}), & \text{if} \ n \geq d, \\
\frac{{\bf f}^{(k)}}{\rho}-\frac{({\bf H}^\top({\bf U}^{-1}({\bf L}^{-1}({\bf H}{\bf f}^{(k)}))))}{\rho^{2}} & \textnormal{otherwise,} \\
\end{array}\right.
\end{equation}
where ${\bf L}$ and ${\bf U}$ are the Cholesky factorization of $\rho{\bf I}_d+ {\bf H}^\top {\bf H}$, if $n\geq d$, and the Cholesky factorization of ${\bf I}_n+ \frac{1}{\rho}{\bf H} {\bf H}^\top$ otherwise.

\newtheorem{myTheo}{Proposition}
\begin{myTheo}
	\label{theo1}
	For the case when $d\gg n$, the computational cost of the reformulated {\bf w}-update by Equation (\ref{wsolution2}) is $ O(dn^2)$ flops, giving rise to an improvement by a factor of $(d/n)^2$ over the naive {\bf w}-update by Equation (\ref{wsolution}).
	
\end{myTheo}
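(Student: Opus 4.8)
The plan is to prove the proposition by a careful flop count of the two update formulas, invoking the standard costs of elementary dense linear-algebra primitives, and then exploiting the regime $d \gg n$ to isolate the dominant term in each case.

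First I would count the naive update \eqref{wsolution}. With ${\bf H} = {\bf Y}{\bf X} \in \mathbb{R}^{n\times d}$, forming the Gram matrix ${\bf H}^\top{\bf H} \in \mathbb{R}^{d\times d}$ costs $O(d^2 n)$ flops, adding $\rho_1{\bf I}_d$ costs $O(d)$, inverting (equivalently, factorizing and back-solving) the $d\times d$ symmetric positive-definite matrix $\rho_1{\bf I}_d + \rho_2{\bf H}^\top{\bf H}$ costs $O(d^3)$ flops, and the remaining matrix-vector products against the length-$d$ right-hand side cost $O(d^2)$. Since $d \gg n$ yields $d^3 \gg d^2 n$, the $O(d^3)$ term dominates, so evaluating \eqref{wsolution} costs $O(d^3)$ flops.

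Next I would count the reformulated update, which under $d \gg n$ is the second branch of \eqref{wsolution2}, namely \eqref{wsolution12}. Its evaluation splits into three parts: (i) forming ${\bf H}{\bf H}^\top \in \mathbb{R}^{n\times n}$, which is $O(n^2 d)$ flops; (ii) adding $\tfrac{1}{\rho}{\bf I}_n$ and Cholesky-factorizing ${\bf I}_n + \tfrac{1}{\rho}{\bf H}{\bf H}^\top = {\bf L}{\bf U}$, which is $O(n^3)$ flops (the matrix is positive definite, so the factorization exists, as already observed in the derivation); and (iii) given ${\bf L},{\bf U}$, a constant number of matrix-vector products involving ${\bf H}$ or ${\bf H}^\top$ (forming ${\bf f}^{(k)}$, the product ${\bf H}{\bf f}^{(k)}$, and the product ${\bf H}^\top(\cdot)$), each $O(nd)$ flops, two triangular solves ${\bf L}^{-1}(\cdot)$ and ${\bf U}^{-1}(\cdot)$ on length-$n$ vectors, each $O(n^2)$ flops, and $O(d)$ scalar-vector operations. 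Since $d \gg n$ gives $n^2 d \gg n^3$ and $n^2 d \gg nd$, the $O(n^2 d)$ cost of forming ${\bf H}{\bf H}^\top$ dominates, so \eqref{wsolution2} costs $O(dn^2)$ flops.

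Finally I would take the ratio $O(d^3)/O(dn^2) = (d/n)^2$, which establishes the claimed speedup. I do not expect a genuine obstacle here; the only step needing attention is the dominance bookkeeping, i.e.\ verifying that $d \gg n$ makes the $d\times d$ inversion (not the Gram formation) the bottleneck of \eqref{wsolution}, and the $n\times n$ Gram formation (not the Cholesky, nor the tall matrix-vector products) the bottleneck of \eqref{wsolution2}; both follow at once from $d \gg n$. One could also remark that, since the factorization in parts (i)--(ii) does not depend on the iteration index $k$, it can be computed once outside the ADMM loop, after which each per-iteration ${\bf w}$-update costs only $O(nd)$ flops; this refinement is consistent with, but not required by, the stated proposition.
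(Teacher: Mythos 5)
Your proof is correct and follows essentially the same route as the paper's: a term-by-term flop count showing the naive update is dominated by the $O(d^3)$ inversion of $\rho_1{\bf I}_d+\rho_2{\bf H}^\top{\bf H}$ while the reformulated update is dominated by the $O(dn^2)$ formation of ${\bf I}_n+\tfrac{1}{\rho}{\bf H}{\bf H}^\top$, followed by taking the ratio $d^3/(dn^2)=(d/n)^2$. Your closing remark about caching the Cholesky factorization across iterations is likewise exactly the optimization the paper adopts in its algorithm and complexity analysis.
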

\label{proof1}
\begin{proof}
	
	This proof exploits no structure in ${\bf H}$, i.e., our generic method works for any matrix. For convenience, this proof neglects the superscripts of each variable.
	
	For the reformulated ${\bf w}$-update by Equation  (\ref{wsolution2}), we can first form ${\bf f}=\rho({\bf z}-{\bf u})+{\bf H}^\top({\bf s}+{\bf 1}-{\bm \xi}-{\bf v}-b{\bf y})$ at a cost of $O(dn)$ flops. Then forming ${\bf C}={\bf I}_n+\frac{1}{\rho}{\bf H}{\bf H}^\top$ costs $O(dn^2)$ flops, followed by the calculation of ${\bf C}={\bf L}{\bf U}$ via Cholesky factorization at a cost of $O(n^3)$ flops. After that, we can form ${\bf t}_1={\bf H}^\top({\bf U}^{-1}({\bf L}^{-1}({\bf Hf})))$ through two matrix-vector multiplications and two back-solve steps at a cost of $O(dn)$ flops.
	Since it costs $O(d)$ flops for forming $\frac{{\bf f}}{\rho}-\frac{{\bf t}_1}{\rho^2}$  and $O(dn+dn^2+n^3+dn+d)=O(dn^2)$, the overall cost of forming $\frac{\bf f}{\rho}-\frac{({\bf H}^\top({\bf U}^{-1}({\bf L}^{-1}({\bf Hf}))))}{\rho^2}$ is $O(dn^2)$ flops.
	
	In terms of the naive update by Equation (\ref{wsolution}), we can first form ${\bf t}_2=\rho_1({\bf z}-{\bf u})+\rho_2{\bf H}^\top({\bf s}+{\bf 1}-{\bm \xi}-{\bf v}-b{\bf y})$ at a cost of  $O(dn)$ flops. Because $d$ is more than $n$ in order, we can form ${\bf T}=(\rho_1{\bf I}_d+\rho_2{\bf H}^\top {\bf H})^{-1}$ at a cost of $O(d^2n+d^3)=O(d^3)$ flops. Considering that ${\bf t}_2\in\mathbb{R}^{d}$ and ${\bf T}\in\mathbb{R}^{d\times d}$, the cost of forming ${\bf T}{\bf t}_2$ is $O(d^2)$ flops. Thus, the naive method for calculating $(\rho_1{\bf I}_d+\rho_2{\bf H}^\top {\bf H})^{-1}[\rho_1({\bf z}-{\bf u})+\rho_2{\bf H}({\bf s}+{\bf 1}-{\bm \xi}-{\bf v}-b{\bf y})]$ costs $O(dn+d^2+d^3)=O(d^3)$ flops in total.

	Since $d^3/dn^2=(d/n)^2$, thus the reformulated method obtains an improvement by a factor of $O(d/n)^2$ over the naive method. This completes the proof of Proposition \ref{theo1}.
\end{proof}

By letting $\partial{\mathcal{L}}/\partial{b}=0$, we obtain the solution of Equation (\ref{eq6}), that is,
\begin{equation}
\label{eq13}
b^{(k+1)}=\frac{{\bf y}^\top({\bf s}^{(k)}+{\bf 1}-{\bf Hw}^{(k+1)}-\bm{\xi}^{(k)}-{\bf v}^{(k)})}{{\bf y}^\top {\bf y}}.
\end{equation}

In addition, note that Equation~\eqref{eq7} is equivalent to optimizing the following problem:
\begin{equation}
\label{eq14}
{\bf z}^{(k+1)}=\argmin_{\bf z} \ \frac{1}{2}||{\bf z}-({\bf w}^{(k+1)}+{\bf u}^{(k)})||_2^2+\frac{1}{\rho_1} P({\bf z}).
\end{equation}
Based on the observation that $P({\bf z})=\sum_{i=1}^{d}p_\lambda(z_i)$, we can get the solution of problem \eqref{eq14} via solving $d$ independent univariate optimization problems. Let $\bm{\psi}^{(k+1)} = {\bf w}^{(k+1)}+{\bf u}^{(k)}$, then we can obtain the solution of the $i$th entry of variable ${\bf z}$ in the $(k+1)$th iteration, that is,
\begin{equation}
\label{scadbefore}
z_i^{(k+1)}=\argmin_{z_i} \  \frac{1}{2}(z_i-\psi_i^{(k+1)})^2+\frac{1}{\rho_1}p_\lambda(z_i), \  \forall i\in[1,d].
\end{equation}
It has been shown in \cite{gong2013a} that this subproblem admits a closed-form solution for many commonly used nonconvex penalties.
The closed-form solution of ${z}_i^{(k+1)}$ for four commonly used nonconvex regularizers including LSP, SCAD penalty, MCP and capped-$\ell_1$ penalty are shown in the Appendix \ref{zupdate}.

The closed-form solution of Equation~\eqref{eq8} can be obtained by performing $\partial{\mathcal{L}}/\partial{{\bm \xi}}={\bf 0}$, followed by the projection to the 1-dimensional nonnegative set($[0,+\infty)$), that is,
\begin{align}
&\bm{\xi}^{(k+\frac{1}{2})}={\bf s}^{(k)}+{\bf 1}-{\bf v}^{(k)}-{\bf H}{\bf w}^{(k+1)}-b^{(k+1)}{\bf y}-\frac{{\bf 1}}{n\rho_2},\label{eq1410}\\
&{\bm \xi}^{(k+1)}=\max({\bm \xi}^{(k+\frac{1}{2})},{\bf 0}).\label{eq141}
\end{align}

Similarly, the solution of~\eqref{eq9} can be calculated through $\partial{\mathcal{L}}/\partial{{\bf s}}={\bf 0}$. Therefore, we can perform a two-step update as follows.
\begin{align}
&{\bf s}^{(k+\frac{1}{2})}={\bf H}{\bf w}^{(k+1)}+b^{(k+1)}{\bf y}+\bm{\xi}^{(k+1)}-{\bf 1}+{\bf v}^{(k)},\label{eq1420} \\
&{\bf s}^{(k+1)}=\max({\bf s}^{(k+\frac{1}{2})},{\bf 0}).\label{eq1421}
\end{align}

\renewcommand{\algorithmicrequire}{\textbf{Input:}}
\renewcommand{\algorithmicensure}{\textbf{Output:}}
\begin{algorithm}[t]
	\caption{ADMM for Nonconvex Penalized SVMs}
	\label{alg1}
	\begin{algorithmic}[1]
		\Require
		training data $\mathcal{S}$, parameter $\rho_1>0$, $\rho_1>0$, $\lambda$, $\theta$
		\State Initialize ${\bf w}^{(0)}$, $b^{(0)}$, ${\bf z}^{(0)}$, $\bm{\xi}^{(0)}$, ${\bf s}^{(0)}$, ${\bf u}^{(0)}$, ${\bf v}^{(0)}, k\leftarrow 0$
		\State Calculate ${\bf H}={\bf Y}{\bf X}$,  and $\rho=\rho_1/\rho_2$
		\State \textbf{if} $n\geq d$
		\State \ \ \ \ \ Form ${\bf C}=\rho{\bf I}_d+ {\bf H}^\top {\bf H}$.
		\State \textbf{else}
		\State \ \ \ \ \ Form ${\bf C}={\bf I}_n+ \frac{1}{\rho}{\bf H}{\bf H}^\top$.
		\State \textbf{end if}
		\State Calculate Cholesky factorization of ${\bf C}$ (${\bf C}$=${\bf LU}$).
		\State \textbf{repeat}
		\State \ \ \ \ Calculate ${\bf f}^{(k)}=\rho({\bf z}^{(k)}-{\bf u}^{(k)})+{\bf H}^\top({\bf s}^{(k)}+{\bf 1}-\bm{\xi}^{(k)}-{\bf v}^{(k)}-b^{(k)}{\bf y})$
		\State \ \ \ Calculate ${\bf w}^{(k+1)}$:
		\begin{displaymath} \quad \  {\bf w}^{(k+1)}=\left\{\begin{array}{ll}
		{\bf U}^{-1}({\bf L}^{-1}{\bf f}^{(k)}), & \text{if} \ n \geq d, \\
		\frac{{\bf f}^{(k)}}{\rho}-\frac{({\bf H}^\top({\bf U}^{-1}({\bf L}^{-1}({\bf H}{\bf f}^{(k)}))))}{\rho^{2}} & \textnormal{otherwise.} \\
		\end{array}\right.\end{displaymath}
		
		\State \ \ \ Calculate $b^{(k+1)}=\frac{{\bf y}^\top({\bf s}^{(k)}+{\bf 1}-{\bf Hw}^{(k+1)}-\bm{\xi}^{(k)}-{\bf v}^{(k)})}{{\bf y}^\top {\bf y}}$.
		\State  \ \ \ Calculate ${\bf z}^{(k+1)}$:
		\begin{displaymath} \ \ \ \ \ {\bf z}^{(k+1)}=\argmin_{{\bf z}} \ \frac{1}{2}||{\bf z}-({\bf w}^{(k+1)}+{\bf u}^{(k)})||_2^2+\frac{1}{\rho_1} P({\bf z}) \end{displaymath}.
		\State \ \ \ Calculate $\bm{\xi}^{(k+\frac{1}{2})}={\bf s}^{(k)}+{\bf 1}-{\bf v}^{(k)}-{\bf H}{\bf w}^{(k+1)}-b^{(k+1)}{\bf y}-\frac{{\bf 1}}{n\rho_2}$.
		\State \ \ \ \ Calculate $\bm{\xi}^{(k+1)}=\max(\bm{\xi}^{(k+\frac{1}{2})},{\bf 0})$.
		\State \ \ \ Calculate ${\bf s}^{(k+\frac{1}{2})}={\bf H}{\bf w}^{(k+1)}+b^{(k+1)}{\bf y}+\bm{\xi}^{(k+1)}-{\bf 1}+{\bf v}^{(k)}$;
		\State \ \ \  Calculate ${\bf s}^{(k+1)}=\max({\bf s}^{(k+\frac{1}{2})},{\bf 0})$.
		\State \ \ \ Calculate ${\bf u}^{(k+1)}={\bf u}^{(k)}+({\bf w}^{(k+1)}-{\bf z}^{(k+1)})$.
		\State \ \ \ \ Calculate ${\bf v}^{(k+1)}={\bf v}^{(k)}+(\bm{\xi}^{(k+1)}-{\bf s}^{(k+1)}+{\bf Hw}^{(k+1)}+b{\bf y}-{\bf 1})$.
		\State \ \ \  $k\leftarrow k+1$.
		\State \textbf{until} stopping criterion is satisfied.
		\Ensure the solution ${\bf w}^\star$ and $b^\star$
	\end{algorithmic}
\end{algorithm}

\subsection{Algorithm and Computational Cost Analysis} \label{algorimanalysis}
The procedure for solving nonconvex penalized SVMs via ADMM is shown in Algorithm \ref{alg1}.
It mainly consists of two parts: the pre-computation stage (line 1-8) and the iteration stage (line 9-21).

In Algorithm \ref{alg1}, the primal and dual variables are initialized first at line 1, followed by the calculation of two constant variable ${\bf H}$ and ${\rho}$ at line 2.
Since ${\bf Y}$ is a diagonal matrix, line 2 can be carried out at a total cost of $O(n^2)$ flops.
Note that the parameter $\rho_1$ and $\rho_2$ remain unchanged throughout the ADMM procedure. Thus we can carry out the Cholesky factorization according to the value of $d$ and $n$ once, and then use this cached factorization in subsequent solve steps. In algorithm \ref{alg1}, we first form an intermediate variable ${\bf C}$, a $d\times d$ or $n\times n$ matrix, according to the value of $d$ and $n$, and then factor it (line 3-8).
According to analysis arising in the proof of proposition \ref{theo1}, forming ${\bf C}$ and then factoring it cost $O(dn^2)$
flops when the order of $d$ is more than $n$. Meanwhile, if $d$ is on the order of or less than $n$, line 3-8 can be carried out at a cost of $O(d^2n)$ flops.
Therefore, we can see that the overall cost of carrying out the pre-computation stage is $O(dn^2)$ flops, if $d>n$, and $O(d^2n)$ flops otherwise.

After the pre-computation stage, Algorithm \ref{alg1} begins to iterate the ADMM procedure and quits until the pre-defined stopping criterion is satisfied (line 9-21). For the ${\bf w}$-update, ${\bf f}^{(k)}$ can be first obtained via performing line 10 at a cost of $O(dn)$ flops.
Then if the order of $d$ is more than $n$, we can see that $\frac{{\bf f}^{(k)}}{\rho}-\frac{({\bf H}^\top({\bf U}^{-1}({\bf L}^{-1}({\bf H}{\bf f}^{(k)}))))}{\rho^{2}}$ can be formed at a cost of $O(dn)$ flops according to the analysis arising in the proof of proposition \ref{theo1}. Otherwise, it takes $O(d^2)$ flops to form ${\bf U}^{-1}({\bf L}^{-1}{\bf f}^{(k)})$ via two back-solve steps.
Since  $O(dn)+O(dn)=O(dn)$ and $O(dn)+O(d^2)=O(dn)$,
the ${\bf w}$-update costs $O(dn)$ flops in any case. In terms of the update of ${\bf z}$, it has been shown that we can get the solution of ${\bf z}^{(k+1)}$ by solving $d$ independent univariate optimization problems and each of these univariate optimization problems owns a closed-form solution. Therefore, this step can be carried out at a cost of $O(d)\times O(1)=O(d)$ flops. Moreover, line 12 and line 14-20 can be easily carried out at a cost of $O(dn)$ flops in total.
Since $O(dn)+O(d)+O(dn)=O(dn)$, thus it takes $O(dn)$ flops per iteration.

As a result, we can see that the overall computational cost of Algorithm \ref{alg1} is
\begin{equation}
\label{complexity}
\left\{\begin{array}{ll}
{O(d^2n)+O(dn)\times\#iterations} \ \ \text{if} \ n \geq d, \\
{O(dn^2)+O(dn)\times\#iterations} \ \ \textnormal{otherwise.} \\
\end{array}\right.
\end{equation}

The computational complexity shown in (\ref{complexity}) demonstrates the efficiency of the proposed algorithm.
In addition, note that the computational complexity analysis discussed above does not consider the sparse structure of the feature matrix. When exploring the sparsity, the overall computational complexity of Algorithm \ref{alg1} can be further decreased.
Meanwhile, it has been shown that ADMM can converge to modest accuracy-sufficient for many applications-within a few tens of iterations in \cite{boyd2011distributed}. The experimental results also demonstrate this point. We find that Algorithm \ref{alg1} always converges within only a few tens of iterations to get a reasonable result by appropriately tuning the parameter $\lambda$, $\theta$, $\rho_1$ and $\rho_2$.

\section{Convergence Analysis}\label{convergenceanalysis}
In this section, we present the detailed convergence analysis of the proposed algorithm. To present the analysis, we first modify a little about the scheme for updating ${\bf z}^{(k+1)}$, that is,
\begin{align}\label{eq7'}
{\bf z}^{(k+1)}&=\argmin_{\bf z}\ \mathcal{L}({\bf w}^{(k+1)}, b^{(k+1)}, {\bf z}, \bm{\xi}^{(k)}, {\bf s}^{(k)}, {\bf u}^{(k)}, {\bf v}^{(k)})\nonumber\\
&+\frac{\beta}{2}\|{\bf z}-{\bf z}^{(k)}\|^2,
\end{align}
where $\beta>0$ but is small. If $\beta=0$, \eqref{eq7'} equals to \eqref{eq7}; and if $\beta>0$  is very small,  \eqref{eq7'} is very close to \eqref{eq7}.
After that, we give the convergence analysis following the proof framework built in \cite{wang2015global}, which is also used in \cite{sun2017alternating,sun2017iteratively}. However, it's noting that our work is totally not an simple extension of \cite{wang2015global}. As mentioned before, \cite{wang2015global} cannot be applied to solve the nonconvex penalized hinge loss function since it requires the loss function to be differentiable.

Before giving the convergence analysis, We need following two assumptions.

\begin{assumption}\label{ass1}
For any $k$, ${\bf v}^{(k)}\in \emph{Im}({\bf y})$.
\end{assumption}

\begin{assumption}\label{ass2}
The augmented Lagrangian function
$\mathcal{L}({\bf w},b,{\bf z},\bm{\xi},{\bf s},{\bf u},{\bf v})$ has a lower bound, that is, $\inf \mathcal{L}({\bf w},b,{\bf z},\bm{\xi},{\bf s},{\bf u},{\bf v})>-\infty$.
\end{assumption}

Now we introduce several  definitions and properties needed in the analysis.
\begin{definition}\label{sconvex}
We say $f(x)$ is strongly convex with constant $\delta\geq 0$, if the function $f(x)-\frac{\delta\|x\|^2}{2}$ is also convex.
\end{definition}

If a function is strongly convex, the following fact obviously holds:
Let $x^*$ be a minimizer of $f$ which is strongly convex with constant $\delta$. Then, it holds that
\begin{align}\label{strongconvex}
f(x)-f(x^*)\geq\frac{\delta}{2}\|x-x^*\|^2.
\end{align}

To simplify the presentation, we use
$${\bf D}^{(k)}:=({\bf w}^{(k)}, b^{(k)}, {\bf z}^{(k)},  \bm{\xi}^{(k)}, {\bf s}^{(k)}, {\bf u}^{(k)}, {\bf v}^{(k)}).$$

Now, we are prepared to present the convergence analysis of our algorithm.
\begin{lemma}
Let $\{{\bf D}^{(k)}\}_{k=0,1,2,\ldots}$ be generated by our algorithm, then,
\begin{align}
&\mathcal{L}({\bf w}^{(k)}, b^{(k)}, {\bf z}^{(k)},  \bm{\xi}^{(k)}, {\bf s}^{(k)}, {\bf u}^{(k)}, {\bf v}^{(k)})\nonumber\\
&\geq\mathcal{L}({\bf w}^{(k+1)}, b^{(k+1)}, {\bf z}^{(k+1)},  \bm{\xi}^{(k+1)}, {\bf s}^{(k+1)}, {\bf u}^{(k)}, {\bf v}^{(k)})\nonumber\\
&+\frac{\nu}{2}\|{\bf D}^{(k+1)}-{\bf D}^{(k)}\|^2,
\end{align}
where
$$\nu:=\min\{\rho_1+\rho_2\sigma_{\min}({\bf H}^{\top}{\bf H}),\rho_2,\rho_2\|{\bf y}\|^2,\beta\}.$$
\end{lemma}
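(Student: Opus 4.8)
The plan is to prove the sufficient-decrease inequality by analyzing each of the five primal subproblems $({\bf w},b,{\bf z},{\bm\xi},{\bf s})$ separately and exploiting the strong convexity of $\mathcal{L}$ in each block while the others are held fixed. Concretely, I would telescope the Lagrangian along the update order \eqref{eq5}--\eqref{eq9}: write the total drop
$$
\mathcal{L}({\bf D}^{(k)};{\bf u}^{(k)},{\bf v}^{(k)})-\mathcal{L}({\bf w}^{(k+1)},b^{(k+1)},{\bf z}^{(k+1)},{\bm\xi}^{(k+1)},{\bf s}^{(k+1)};{\bf u}^{(k)},{\bf v}^{(k)})
$$
as a sum of five consecutive differences, each of which changes only one block. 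Since each subproblem minimizes $\mathcal{L}$ over its block (for ${\bm\xi},{\bf s}$ over the nonnegative orthant, which is still a convex set), inequality \eqref{strongconvex} applies provided I identify the strong-convexity constant of $\mathcal{L}$ as a function of that single block.

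The key step is thus to read off the block-wise strong convexity constants from the quadratic terms in \eqref{eq4}. For ${\bf w}$, the Hessian of $\mathcal{L}$ restricted to ${\bf w}$ is $\rho_1{\bf I}_d+\rho_2{\bf H}^{\top}{\bf H}$, so the constant is $\rho_1+\rho_2\sigma_{\min}({\bf H}^{\top}{\bf H})$; applying \eqref{strongconvex} with minimizer ${\bf w}^{(k+1)}$ gives a drop of at least $\tfrac{\rho_1+\rho_2\sigma_{\min}({\bf H}^\top{\bf H})}{2}\|{\bf w}^{(k+1)}-{\bf w}^{(k)}\|^2$. For $b$, the relevant term is $\tfrac{\rho_2}{2}\|b{\bf y}+\cdots\|^2$, giving constant $\rho_2\|{\bf y}\|^2$. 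For ${\bm\xi}$ and for ${\bf s}$, each appears in a term $\tfrac{\rho_2}{2}\|\cdot\|^2$ with identity coefficient matrix, giving constant $\rho_2$ for each. For ${\bf z}$, the objective \eqref{eq7'} is $\mathcal{L}(\cdots,{\bf z},\cdots)$ plus $\tfrac{\beta}{2}\|{\bf z}-{\bf z}^{(k)}\|^2$; although $P({\bf z})$ is nonconvex, adding $\tfrac{\rho_1}{2}\|{\bf z}-\cdots\|^2+\tfrac{\beta}{2}\|{\bf z}-{\bf z}^{(k)}\|^2$ makes the augmented $z$-subproblem objective strongly convex with constant $\beta$ as long as $\rho_1$ dominates the (weak) concavity modulus of $P$ — this is exactly why the proximal term $\beta>0$ is inserted, and I would either invoke that the penalties in Table~\ref{Tab:Penalty} are weakly convex with modulus $\le\rho_1$ under the stated parameter ranges, or simply treat $\beta$ as the clean lower bound on the achievable strong convexity. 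Summing the five bounds and using $\tfrac12\sum_i c_i\|\Delta_i\|^2\ge\tfrac{\nu}{2}\sum_i\|\Delta_i\|^2=\tfrac{\nu}{2}\|{\bf D}^{(k+1)}-{\bf D}^{(k)}\|^2$ with $\nu=\min\{\rho_1+\rho_2\sigma_{\min}({\bf H}^\top{\bf H}),\rho_2,\rho_2\|{\bf y}\|^2,\beta\}$ — noting that the dual blocks ${\bf u},{\bf v}$ are not yet updated so they contribute zero to $\|{\bf D}^{(k+1)}-{\bf D}^{(k)}\|^2$ here — yields the claim.

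I expect the main obstacle to be the ${\bf z}$-subproblem: establishing that its augmented objective is genuinely strongly convex (not merely that the proximal term is present) requires controlling the nonconvexity of $P$. The cleanest route is to note each $p_\lambda$ in Table~\ref{Tab:Penalty} is $\rho$-weakly convex for some $\rho<\infty$ (SCAD: $1/(\theta-1)$; MCP: $1/\theta$; LSP and capped-$\ell_1$ likewise have bounded concavity modulus), so for $\rho_1$ large enough $\tfrac{1}{\rho_1}P({\bf z})+\tfrac12\|{\bf z}-\psi\|^2+\tfrac{\beta}{2\rho_1}\|{\bf z}-{\bf z}^{(k)}\|^2$ is convex and in fact $\tfrac{\beta}{\rho_1}$-strongly convex; scaling back by $\rho_1$ gives strong convexity constant $\beta$ in $\mathcal{L}$-units. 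A secondary technical point is that \eqref{strongconvex} for the constrained ${\bm\xi},{\bf s}$ updates requires the minimizer to lie in the feasible set, which holds by construction since \eqref{eq141} and \eqref{eq1421} are exactly the projected minimizers, and strong convexity of a quadratic over a convex set still gives the quadratic growth bound around its constrained minimizer. The rest is routine bookkeeping of which variables are frozen at iterate $k$ versus $k+1$ in each of the five steps.
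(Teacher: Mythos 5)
Your block-wise telescoping, and the strong-convexity constants you read off for the ${\bf w}$, $b$, $\bm{\xi}$ and ${\bf s}$ steps ($\rho_1+\rho_2\sigma_{\min}({\bf H}^{\top}{\bf H})$, $\rho_2\|{\bf y}\|^2$, $\rho_2$, $\rho_2$), are exactly the paper's argument, including the point that quadratic growth around the constrained minimizer over the nonnegative orthant is still valid for the $\bm{\xi}$ and ${\bf s}$ updates. The genuine gap is your treatment of the ${\bf z}$-step. You propose to establish strong convexity of the proximally regularized ${\bf z}$-subproblem by invoking weak convexity of $P$ with modulus dominated by $\rho_1$. That route (i) imposes a lower bound on $\rho_1$ that is nowhere assumed in the lemma, and (ii) fails outright for one of the covered penalties: the capped-$\ell_1$ regularizer $\lambda\min(|z_i|,\theta)$ in Table \ref{Tab:Penalty} has a concave kink at $|z_i|=\theta$ (left slope $\lambda$, right slope $0$), so adding any quadratic never makes it convex --- it is not weakly convex with any finite modulus. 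Your fallback, to ``simply treat $\beta$ as the clean lower bound on the achievable strong convexity,'' is not an argument; the subproblem simply need not be strongly convex, and \eqref{strongconvex} is then unavailable for this block.

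What you are missing is that no convexity of the ${\bf z}$-subproblem is needed at all. Since ${\bf z}^{(k+1)}$ is a global minimizer of ${\bf z}\mapsto\mathcal{L}({\bf w}^{(k+1)},b^{(k+1)},{\bf z},\cdot,\cdot,{\bf u}^{(k)},{\bf v}^{(k)})+\frac{\beta}{2}\|{\bf z}-{\bf z}^{(k)}\|^2$ as defined in \eqref{eq7'}, comparing its optimal value with the value of this same objective at the point ${\bf z}={\bf z}^{(k)}$, where the proximal term vanishes, immediately gives
\begin{align*}
\mathcal{L}(\ldots,{\bf z}^{(k)},\ldots)\geq\mathcal{L}(\ldots,{\bf z}^{(k+1)},\ldots)+\frac{\beta}{2}\|{\bf z}^{(k+1)}-{\bf z}^{(k)}\|^2,
\end{align*}
which is precisely the $\beta$-term required in $\nu$; this is how the paper obtains its inequality \eqref{4.5}, with zero assumptions on the convexity of $P$ or on the size of $\rho_1$. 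With that one-line replacement your proof coincides with the paper's. (Your side remark that the dual increments are not counted in $\|{\bf D}^{(k+1)}-{\bf D}^{(k)}\|^2$ at this stage is the same implicit convention the paper uses when summing its five block inequalities, so it is not an additional discrepancy.)
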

\begin{proof}
Noting $\mathcal{L}({\bf w}, b^{(k)}, {\bf z}^{(k)},  \bm{\xi}^{(k)}, {\bf s}^{(k)}, {\bf u}^{(k)}, {\bf v}^{(k)})$ is strongly convex with $\rho_1+\rho_2\sigma_{\min}({\bf H}^{\top}{\bf H})$ with respect to ${\bf w}$. Thus, minimization of $\mathcal{L}({\bf w}, b^{(k)}, {\bf z}^{(k)},  \bm{\xi}^{(k)}, {\bf s}^{(k)}, {\bf u}^{(k)}, {\bf v}^{(k)})$  directly yields
\begin{align}\label{4.1}
&\mathcal{L}({\bf w}^{(k)}, b^{(k)}, {\bf z}^{(k)},  \bm{\xi}^{(k)}, {\bf s}^{(k)}, {\bf u}^{(k)}, {\bf v}^{(k)})\nonumber\\
&\geq\mathcal{L}({\bf w}^{(k+1)}, b^{(k)}, {\bf z}^{(k)},  \bm{\xi}^{(k)}, {\bf s}^{(k)}, {\bf u}^{(k)}, {\bf v}^{(k)})\nonumber\\
&+\frac{\rho_1+\rho_2\sigma_{\min}({\bf H}^{\top}{\bf H})}{2}\|{\bf w}^{(k+1)}-{\bf w}^{(k)}\|^2.
\end{align}
Similarly, we can obtain the following inequalities
\begin{align}\label{4.2}
&\mathcal{L}({\bf w}^{(k+1)}, b^{(k)}, {\bf z}^{(k)},  \bm{\xi}^{(k)}, {\bf s}^{(k)}, {\bf u}^{(k)}, {\bf v}^{(k)})\nonumber\\
&\geq\mathcal{L}({\bf w}^{(k+1)}, b^{(k+1)}, {\bf z}^{(k)},  \bm{\xi}^{(k)}, {\bf s}^{(k)}, {\bf u}^{(k)}, {\bf v}^{(k)})\nonumber\\
&+\frac{\rho_2\|{\bf y}\|^2}{2}\|b^{(k+1)}-b^{(k)}\|^2,
\end{align}
and
\begin{align}\label{4.3}
&\mathcal{L}({\bf w}^{(k+1)}, b^{(k+1)}, {\bf z}^{(k+1)},  \bm{\xi}^{(k)}, {\bf s}^{(k)}, {\bf u}^{(k)}, {\bf v}^{(k)})\nonumber\\
&\geq\mathcal{L}({\bf w}^{(k+1)}, b^{(k+1)}, {\bf z}^{(k+1)},  \bm{\xi}^{(k+1)}, {\bf s}^{(k)}, {\bf u}^{(k)}, {\bf v}^{(k)})\nonumber\\
&+\frac{\rho_2}{2}\|{\bf \xi}^{(k+1)}-{\bf \xi}^{(k)}\|^2,
\end{align}
and
\begin{align}\label{4.4}
&\mathcal{L}({\bf w}^{(k+1)}, b^{(k+1)}, {\bf z}^{(k+1)},  \bm{\xi}^{(k+1)}, {\bf s}^{(k)}, {\bf u}^{(k)}, {\bf v}^{(k)})\nonumber\\
&\geq\mathcal{L}({\bf w}^{(k+1)}, b^{(k+1)}, {\bf z}^{(k+1)},  \bm{\xi}^{(k+1)}, {\bf s}^{(k+1)}, {\bf u}^{(k)}, {\bf v}^{(k)})\nonumber\\
&+\frac{\rho_2}{2}\|{\bf s}^{(k+1)}-{\bf s}^{(k)}\|^2.
\end{align}
Noting  ${\bf z^{(k+1)}}$ is the minimizer  of $\mathcal{L}({\bf w}^{(k+1)}, b^{(k+1)}, {\bf z},  \bm{\xi}^{(k+1)}, {\bf s}^{(k)}, {\bf u}^{(k)}, {\bf v}^{(k)})+\frac{\beta\|{\bf z}-{\bf z}^{(k)}\|^2}{2}$ with respect to ${\bf z}$,
which means
\begin{align}\label{4.5}
&\mathcal{L}({\bf w}^{(k+1)}, b^{(k+1)}, {\bf z}^{(k)},  \bm{\xi}^{(k+1)}, {\bf s}^{(k)}, {\bf u}^{(k)}, {\bf v}^{(k)})\nonumber\\
&\geq\mathcal{L}({\bf w}^{(k+1)}, b^{(k+1)}, {\bf z}^{(k+1)},  \bm{\xi}^{(k+1)}, {\bf s}^{(k+1)}, {\bf u}^{(k)}, {\bf v}^{(k)})\nonumber\\
&+\frac{\beta\|{\bf z}^{(k+1)}-{\bf z}^{(k)}\|^2}{2}.
\end{align}
Summing \eqref{4.1}-\eqref{4.5} yields
\begin{align}
&\mathcal{L}({\bf w}^{(k)}, b^{(k)}, {\bf z}^{(k)},  \bm{\xi}^{(k)}, {\bf s}^{(k)}, {\bf u}^{(k)}, {\bf v}^{(k)})\nonumber\\
&\geq\mathcal{L}({\bf w}^{(k+1)}, b^{(k+1)}, {\bf z}^{(k+1)},  \bm{\xi}^{(k+1)}, {\bf s}^{(k+1)}, {\bf u}^{(k)}, {\bf v}^{(k)})\nonumber\\
&+\frac{\nu}{2}\|{\bf D}^{(k+1)}-{\bf D}^{(k)}\|^2,
\end{align}
where
$$\nu:=\min\{\rho_1+\rho_2\sigma_{\min}({\bf H}^{\top}{\bf H}),\rho_2,\rho_2\|{\bf y}\|^2,\beta\}.$$
\end{proof}

\begin{lemma}
If Assumption \ref{ass1} holds,
\begin{align}\label{ca1}
&\|{\bf v}^{(k+1)}-{\bf v}^{(k)}\|^2\leq c_1\|{\bf D}^{(k+1)}-{\bf D}^{(k)}\|^2\nonumber\\
&\quad\quad+c_2\|{\bf D}^{(k+2)}-{\bf D}^{(k+1)}\|^2,
\end{align}
and
\begin{align}\label{ca2}
&\|{\bf u}^{(k+1)}-{\bf u}^{(k)}\|^2\leq
c_3\|{\bf D}^{(k+1)}-{\bf D}^{(k)}\|^2\nonumber\\
&\quad\quad+c_4\|{\bf D}^{(k+2)}-{\bf D}^{(k+1)}\|^2,
\end{align}
where $c_1,c_2>0$ is a polynomial composition of $\|{\bf H}\|,\|{\bf y}\|$, and $c_3=O(\frac{1}{\rho_1^2})$, and $c_4=O(\frac{1}{\rho_1^2})$.
\end{lemma}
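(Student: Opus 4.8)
The plan is to express the dual-variable increments in terms of primal quantities using the optimality conditions of the ${\bf w}$- and $b$-updates, and then bound those primal quantities by the consecutive iterate differences that appear in the right-hand sides. First I would handle the ${\bf u}$-update. From \eqref{eq10} we have ${\bf u}^{(k+1)}-{\bf u}^{(k)}={\bf w}^{(k+1)}-{\bf z}^{(k+1)}$, but that is not directly controllable by $\|{\bf D}^{(k+1)}-{\bf D}^{(k)}\|$; instead I would use the first-order optimality condition for the ${\bf z}$-subproblem \eqref{eq7'}, namely $\nabla P({\bf z}^{(k+1)})$-type inclusion together with $\rho_1({\bf w}^{(k+1)}-{\bf z}^{(k+1)}+{\bf u}^{(k)})=\partial P({\bf z}^{(k+1)})+\beta({\bf z}^{(k+1)}-{\bf z}^{(k)})$, combined with the analogous relation one step later, and subtract. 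This is the standard trick: the subdifferential terms cancel up to a Lipschitz-type estimate on $\partial P$, leaving $\|{\bf u}^{(k+1)}-{\bf u}^{(k)}\|$ controlled by $\frac{1}{\rho_1}$ times a combination of $\|{\bf w}^{(k+2)}-{\bf w}^{(k+1)}\|$, $\|{\bf z}^{(k+1)}-{\bf z}^{(k)}\|$, $\|{\bf z}^{(k+2)}-{\bf z}^{(k+1)}\|$ — all of which are pieces of $\|{\bf D}^{(k+1)}-{\bf D}^{(k)}\|$ and $\|{\bf D}^{(k+2)}-{\bf D}^{(k+1)}\|$. Squaring and using $(a+b)^2\le 2a^2+2b^2$ gives the claimed form with $c_3,c_4=O(1/\rho_1^2)$.

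For the ${\bf v}$-update I would proceed similarly but using the optimality conditions of the ${\bf w}$-update \eqref{eq5} and the $b$-update \eqref{eq6}, which are smooth. The stationarity condition $\partial\mathcal{L}/\partial{\bf w}=0$ at iteration $k$ reads $\rho_1({\bf w}^{(k+1)}-{\bf z}^{(k)}+{\bf u}^{(k)})+\rho_2{\bf H}^{\top}({\bf H}{\bf w}^{(k+1)}+b^{(k)}{\bf y}+\bm{\xi}^{(k)}-{\bf s}^{(k)}-{\bf 1}+{\bf v}^{(k)})=0$; note the bracketed term in the second summand is exactly (up to the $b$ discrepancy $b^{(k)}$ vs $b^{(k+1)}$) what defines ${\bf v}^{(k+1)}-{\bf v}^{(k)}$ via \eqref{eq11}. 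So ${\bf H}^{\top}{\bf v}^{(k+1)}$ can be solved for in terms of ${\bf w}$, ${\bf z}$, ${\bf u}$ increments and a $b$-increment term; Assumption \ref{ass1} (that ${\bf v}^{(k)}\in\mathrm{Im}({\bf y})\subseteq\mathrm{Im}({\bf H})$ — here one uses ${\bf y}={\bf Y}{\bf 1}={\bf H}\cdot$ something, or more carefully that $\mathrm{Im}({\bf y})$ lies in the column space of ${\bf H}^\top$ so that ${\bf v}^{(k+1)}-{\bf v}^{(k)}$ is recovered from ${\bf H}^{\top}({\bf v}^{(k+1)}-{\bf v}^{(k)})$ up to the smallest nonzero singular value) is what makes this inversion legitimate and introduces the dependence of $c_1,c_2$ on $\|{\bf H}\|$ and $\|{\bf y}\|$. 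Writing the resulting identity at steps $k$ and $k+1$, subtracting, and taking norms, then squaring with the elementary inequality, yields \eqref{ca1}.

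The main obstacle I anticipate is the ${\bf v}$-increment: one cannot control ${\bf v}^{(k+1)}-{\bf v}^{(k)}$ by the primal residual directly, so the whole argument hinges on Assumption \ref{ass1} letting us pass from ${\bf H}^{\top}({\bf v}^{(k+1)}-{\bf v}^{(k)})$ back to ${\bf v}^{(k+1)}-{\bf v}^{(k)}$ itself — this is exactly where the differentiability-free analysis departs from \cite{wang2015global}, which would instead invoke a Lipschitz gradient of the smooth block. I would need to verify carefully that the relevant subspace ($\mathrm{Im}({\bf y})$, or its span) is preserved by the ${\bf v}$-iteration (so that the image restriction in Assumption \ref{ass1} is consistent) and that ${\bf H}$ restricted to that subspace is bounded below; the constants $c_1,c_2$ then emerge as explicit polynomials in $\|{\bf H}\|$, $\|{\bf y}\|$ and $1/\sigma$ where $\sigma$ is that lower bound. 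The $b$-update contributes a term proportional to $\|{\bf y}\|^2\|b^{(k+1)}-b^{(k)}\|$ which is absorbed into $\|{\bf D}^{(k+1)}-{\bf D}^{(k)}\|$ without difficulty. Everything else is routine algebra with the triangle inequality and $(a+b+c)^2\le 3(a^2+b^2+c^2)$.
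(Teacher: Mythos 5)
Your route for the $\|{\bf u}^{(k+1)}-{\bf u}^{(k)}\|$ bound has a genuine gap. You propose to use the optimality condition of the ${\bf z}$-subproblem and to cancel the two subdifferential terms ``up to a Lipschitz-type estimate on $\partial P$.'' No such estimate exists for the penalties covered here: SCAD, MCP, LSP and capped-$\ell_1$ are nonsmooth (all at the origin, capped-$\ell_1$ also at $\pm\theta$), so $\partial P$ is set-valued there and subgradients at arbitrarily close points can differ by a quantity of order $\lambda$; hence $\|{\bf u}^{(k+1)}-{\bf u}^{(k)}\|$ cannot be bounded by $\tfrac{1}{\rho_1}$ times iterate differences along that route. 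This is precisely the obstruction that makes the analysis of \cite{wang2015global} inapplicable, as the paper emphasizes, and your proposal reintroduces the smoothness requirement it is designed to avoid. The paper never touches the ${\bf z}$-block for this purpose: it writes the stationarity condition of the smooth quadratic ${\bf w}$-subproblem at iterations $k$ and $k+1$ (its \eqref{3w} and \eqref{3w-2}), subtracts, and thereby expresses $\rho_1({\bf u}^{(k+1)}-{\bf u}^{(k)})$ exactly in terms of increments of ${\bf w}$, $b$, $\bm{\xi}$, ${\bf s}$, ${\bf z}$ and of ${\bf v}$; the ${\bf v}$-increment is then eliminated using the already established bound \eqref{ca1}, which is how $c_3,c_4=O(1/\rho_1^2)$ and the $\|{\bf D}^{(k+2)}-{\bf D}^{(k+1)}\|$ term (coming from ${\bf w}^{(k+2)}$) arise.

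Your ${\bf v}$-argument is also flawed where it matters. You want to recover ${\bf v}^{(k+1)}-{\bf v}^{(k)}$ from ${\bf H}^{\top}({\bf v}^{(k+1)}-{\bf v}^{(k)})$ via a lower bound on ${\bf H}$ restricted to a subspace, invoking an inclusion of $\mathrm{Im}({\bf y})$ in a column space of ${\bf H}$ or ${\bf H}^\top$. That inclusion is neither implied by Assumption \ref{ass1} nor dimensionally coherent as stated, and the scheme can fail outright: since the increment is a multiple of ${\bf y}$, ${\bf H}^{\top}({\bf v}^{(k+1)}-{\bf v}^{(k)})$ is a multiple of ${\bf H}^{\top}{\bf y}={\bf X}^{\top}{\bf 1}$, which may vanish (e.g.\ centered data), so no inverse-singular-value constant exists and the claimed dependence of $c_1,c_2$ on $\|{\bf H}\|,\|{\bf y}\|$ alone would be lost. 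The paper instead differences the $b$-stationarity conditions (its \eqref{3b} and \eqref{3b-2}), which control $\|{\bf y}^{\top}({\bf v}^{(k+1)}-{\bf v}^{(k)})\|$; because Assumption \ref{ass1} forces the increment to be parallel to ${\bf y}$, this single projection already recovers the full norm, since $\|{\bf y}^{\top}(\alpha{\bf y})\|=\|{\bf y}\|\,\|\alpha{\bf y}\|$, with no lower-bound hypothesis on ${\bf H}$ needed. You do mention the $b$-update in passing, but the mechanism you actually rely on is the ${\bf H}^\top$ inversion, which is the wrong one.
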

\begin{proof}
The optimization condition for updating ${\bf w}^{(k+1)}$ gives
\begin{align}
\nabla\mid_{{\bf w}={\bf w}^{(k+1)}}\mathcal{L}({\bf w}, b^{(k)}, {\bf z}^{(k)},  \bm{\xi}^{(k)}, {\bf s}^{(k)}, {\bf u}^{(k)}, {\bf v}^{(k)})=\textbf{0}.
\end{align}
That is also
\begin{align}\label{3w}
&\rho_1({\bf w}^{(k+1)}-{\bf z}^{(k)}+{\bf u}^{(k)})+\rho_1 {\bf H}^{\top}({\bf H}{\bf w}^{(k+1)}\nonumber\\
&+b^{(k)}{\bf y}+\bm{\xi}^{(k)}-{\bf s}^{(k)}-{\bf 1}+{\bf v}^{(k)})=\textbf{0}.
\end{align}
On the other hand, the optimization condition for updating $b^{(k+1)}$ gives
\begin{align}
\nabla\mid_{b=b^{(k+1)}}\mathcal{L}({\bf w}^{(k+1)}, b, {\bf z}^{(k)},  {\bf \xi}^{(k)}, {\bf s}^{(k)}, {\bf u}^{(k)}, {\bf v}^{(k)})=\textbf{0}.
\end{align}
That can be represented as
\begin{align}\label{3b}
{\bf y}^{\top} ({\bf H}{\bf w}^{(k+1)}+b^{(k+1)}{\bf y}+\bm{\xi}^{(k)}-{\bf s}^{(k)}-{\bf 1}+{\bf v}^{(k)})=\textbf{0}.
\end{align}
In \eqref{3b}, letting $k=k+1$,
\begin{align}\label{3b-2}
{\bf y}^{\top} ({\bf H}{\bf w}^{(k+2)}+b^{(k+2)}{\bf y}+\bm{\xi}^{(k+1)}-{\bf s}^{(k+1)}-{\bf 1}+{\bf v}^{(k+1)})=\textbf{0}.
\end{align}
Subtraction of \eqref{3b} and \eqref{3b-2} gives
\begin{align}
&\|{\bf y}^{\top}({\bf v}^{(k+1)}-{\bf v}^{(k)})\|\leq  \|{\bf H}\|\|{\bf y}\|\|{\bf w}^{(k+2)}-{\bf w}^{(k+1)}\|\nonumber\\
&+\|{\bf y}\|\|b^{(k+2)}-b^{(k+1)}\|+\|{\bf y}\|\| \bm{\xi}^{(k+1)}-\bm{\xi}^{(k)}\|\nonumber\\
&+\|{\bf y}\|\| {\bf s}^{(k+1)}-{\bf s}^{(k)}\|.
\end{align}
With Assumption \ref{ass1}, we get
\begin{align}\label{ca1-l}
&\|{\bf v}^{(k+1)}-{\bf v}^{(k)}\|^2\leq c_1\|{\bf D}^{(k+1)}-{\bf D}^{(k)}\|^2\nonumber\\
&\quad\quad+c_2\|{\bf D}^{(k+2)}-{\bf D}^{(k+1)}\|^2,
\end{align}
where $c_1,c_2>0$ is a polynomial composition of $\|{\bf H}\|,\|{\bf y}\|$.
In \eqref{3w}, letting $k=k+1$,
\begin{align}\label{3w-2}
&\rho_1({\bf w}^{(k+2)}-{\bf z}^{(k+1)}+{\bf u}^{(k+1)})+\rho_1 {\bf H}^{\top}({\bf H}{\bf w}^{(k+2)}\nonumber\\
&+b^{(k+1)}{\bf y}+\bm{\xi}^{(k+1)}-{\bf s}^{(k+1)}-{\bf 1}+{\bf v}^{(k+1)})=\textbf{0}.
\end{align}
Similarly, Subtraction of \eqref{3w} and \eqref{3w-2} tells us
\begin{align}\label{ca2'}
&\|{\bf u}^{(k+1)}-{\bf u}^{(k)}\|^2\leq \hat{c}_3\|{\bf D}^{(k+1)}-{\bf D}^{(k)}\|^2\nonumber\\
&\quad\quad+\hat{c}_4\|{\bf v}^{(k)}-{\bf v}^{(k)}\|^2,
\end{align}
where $\hat{c}_3,\hat{c}_4>0$ is a polynomial composition of $\rho_1,\|{\bf H}\|,\|{\bf y}\|$ and $\hat{c}_3=O(\frac{1}{\rho_1^2})$, and $\hat{c}_4=O(\frac{1}{\rho_1^2})$.
Using \eqref{ca2'} to \eqref{ca1}, we then get
\begin{align}\label{ca2-l}
&\|{\bf u}^{(k+1)}-{\bf u}^{(k)}\|^2\leq
c_3\|{\bf D}^{(k+1)}-{\bf D}^{(k)}\|^2\nonumber\\
&\quad\quad+c_4\|{\bf D}^{(k+2)}-{\bf D}^{(k+1)}\|^2,
\end{align}
where $c_3=O(\frac{1}{\rho_1^2})$, and $c_4=O(\frac{1}{\rho_1^2})$.
\end{proof}

\begin{theorem}
If Assumptions \ref{ass1} and \ref{ass2} hold, and
\begin{align}\label{para-re}
\frac{\nu}{2}>\rho_1c_3+\rho_2c_1+\rho_2c_2+\rho_1c_4.
\end{align}
Then,
$$\lim_{k}\|{\bf D}^{(k+1)}-{\bf D}^{(k)}\|=0.$$
\end{theorem}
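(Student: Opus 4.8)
The plan is to build a Lyapunov (potential) function from the augmented Lagrangian and show it is monotonically decreasing and bounded below, forcing the successive differences to vanish. First I would combine the two lemmas already proved. Lemma 1 gives the sufficient-decrease inequality
\begin{align}\label{step1}
&\mathcal{L}({\bf D}^{(k)})-\mathcal{L}({\bf w}^{(k+1)}, b^{(k+1)}, {\bf z}^{(k+1)},  \bm{\xi}^{(k+1)}, {\bf s}^{(k+1)}, {\bf u}^{(k)}, {\bf v}^{(k)})\nonumber\\
&\geq\frac{\nu}{2}\|{\bf D}^{(k+1)}-{\bf D}^{(k)}\|^2,
\end{align}
but the right-hand argument is not ${\bf D}^{(k+1)}$ because the dual variables have not yet been updated. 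So the second step is to account for the change in $\mathcal{L}$ caused by the dual updates \eqref{eq10}--\eqref{eq11}. Since $\mathcal{L}$ is (after the scaling) quadratic in ${\bf u}$ and ${\bf v}$ with the residuals ${\bf w}-{\bf z}$ and $\bm{\xi}-{\bf s}+{\bf Hw}+b{\bf y}-{\bf 1}$ as the linear pieces, and since ${\bf u}^{(k+1)}-{\bf u}^{(k)}$ equals that first residual and ${\bf v}^{(k+1)}-{\bf v}^{(k)}$ equals the second, one gets exactly
\begin{align}\label{step2}
&\mathcal{L}({\bf w}^{(k+1)}, b^{(k+1)}, {\bf z}^{(k+1)},  \bm{\xi}^{(k+1)}, {\bf s}^{(k+1)}, {\bf u}^{(k+1)}, {\bf v}^{(k+1)})\nonumber\\
&=\mathcal{L}({\bf w}^{(k+1)}, b^{(k+1)}, {\bf z}^{(k+1)},  \bm{\xi}^{(k+1)}, {\bf s}^{(k+1)}, {\bf u}^{(k)}, {\bf v}^{(k)})\nonumber\\
&\quad+\rho_1\|{\bf u}^{(k+1)}-{\bf u}^{(k)}\|^2+\rho_2\|{\bf v}^{(k+1)}-{\bf v}^{(k)}\|^2.
\end{align}

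Combining \eqref{step1} and \eqref{step2} gives
\begin{align}\label{step3}
&\mathcal{L}({\bf D}^{(k)})-\mathcal{L}({\bf D}^{(k+1)})\nonumber\\
&\geq\frac{\nu}{2}\|{\bf D}^{(k+1)}-{\bf D}^{(k)}\|^2-\rho_1\|{\bf u}^{(k+1)}-{\bf u}^{(k)}\|^2-\rho_2\|{\bf v}^{(k+1)}-{\bf v}^{(k)}\|^2.
\end{align}
The dual terms on the right are the ``wrong sign,'' so the third step is to dominate them using Lemma 2: substitute the bounds \eqref{ca1}, \eqref{ca2} into \eqref{step3}, obtaining
\begin{align}\label{step4}
&\mathcal{L}({\bf D}^{(k)})-\mathcal{L}({\bf D}^{(k+1)})\nonumber\\
&\geq\Big(\frac{\nu}{2}-\rho_1c_3-\rho_2c_1\Big)\|{\bf D}^{(k+1)}-{\bf D}^{(k)}\|^2\nonumber\\
&\quad-(\rho_1c_4+\rho_2c_2)\|{\bf D}^{(k+2)}-{\bf D}^{(k+1)}\|^2.
\end{align}
Now define the potential $\Phi^{(k)}:=\mathcal{L}({\bf D}^{(k)})+(\rho_1c_4+\rho_2c_2)\|{\bf D}^{(k+1)}-{\bf D}^{(k)}\|^2$. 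Adding and subtracting, \eqref{step4} rearranges to
\begin{align}\label{step5}
\Phi^{(k)}-\Phi^{(k+1)}\geq\Big(\frac{\nu}{2}-\rho_1c_3-\rho_2c_1-\rho_2c_2-\rho_1c_4\Big)\|{\bf D}^{(k+1)}-{\bf D}^{(k)}\|^2,
\end{align}
and by the hypothesis \eqref{para-re} the coefficient is a strictly positive constant, call it $\kappa>0$. Hence $\{\Phi^{(k)}\}$ is nonincreasing.

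The fourth step is to show $\{\Phi^{(k)}\}$ is bounded below, which by \eqref{step5} then forces $\sum_k\|{\bf D}^{(k+1)}-{\bf D}^{(k)}\|^2<\infty$ and in particular $\|{\bf D}^{(k+1)}-{\bf D}^{(k)}\|\to 0$. Since $\Phi^{(k)}\geq\mathcal{L}({\bf D}^{(k)})$ and Assumption \ref{ass2} gives $\inf\mathcal{L}>-\infty$, boundedness below is immediate, and the theorem follows. I expect the genuine obstacle to be the bookkeeping in steps 2--3: one must verify \eqref{step2} exactly (the scaled augmented Lagrangian is quadratic in the scaled duals with unit-weighted residuals after absorbing the $\rho_i$, so the increment is precisely $\rho_i\|\Delta\|^2$ — a point worth stating carefully, since a wrong constant would break the chain), and one must be careful that the telescoping in step 4 really works despite $\Phi^{(k)}$ depending on the $(k{+}1)$-st iterate — this is why the coefficient $\rho_1c_4+\rho_2c_2$ is chosen to match the ``bad'' forward term in \eqref{step4}. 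The role of Assumption \ref{ass1} is hidden inside Lemma 2 (it is what makes ${\bf v}^{(k+1)}-{\bf v}^{(k)}$ controllable via ${\bf y}^\top({\bf v}^{(k+1)}-{\bf v}^{(k)})$), so no extra work on it is needed here.
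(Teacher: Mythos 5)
Your steps 1--3 are correct and coincide with the paper's own derivation: the exact identity for the dual updates (the increase of $\mathcal{L}$ at fixed primal variables is precisely $\rho_1\|{\bf u}^{(k+1)}-{\bf u}^{(k)}\|^2+\rho_2\|{\bf v}^{(k+1)}-{\bf v}^{(k)}\|^2$) is what the paper invokes as ``direct calculations'' to get \eqref{des}, and substituting \eqref{ca1}--\eqref{ca2} gives exactly the paper's \eqref{des2}.

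The genuine gap is in your step 5. Write $A:=\rho_1c_4+\rho_2c_2$, $B:=\frac{\nu}{2}-\rho_1c_3-\rho_2c_1$ and $\Delta_k:=\|{\bf D}^{(k+1)}-{\bf D}^{(k)}\|^2$. With your potential $\Phi^{(k)}:=\mathcal{L}({\bf D}^{(k)})+A\Delta_k$, your step 4 only yields
\begin{align*}
\Phi^{(k)}-\Phi^{(k+1)}\;\geq\;(A+B)\,\Delta_k-2A\,\Delta_{k+1},
\end{align*}
because $\Phi^{(k+1)}$ is \emph{subtracted}, so its term $+A\Delta_{k+1}$ enters with a minus sign and doubles the bad forward term rather than cancelling it; your claimed per-step decrease $\Phi^{(k)}-\Phi^{(k+1)}\geq\kappa\Delta_k$ would additionally require $\Delta_{k+1}\leq\Delta_k$, which is not known. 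The cancellation works only with the opposite sign, $\tilde a_k:=\mathcal{L}({\bf D}^{(k)})-A\Delta_k$, which does give $\tilde a_k-\tilde a_{k+1}\geq(B-A)\Delta_k>0$ under \eqref{para-re}; but then your lower-boundedness argument (``$\Phi^{(k)}\geq\mathcal{L}({\bf D}^{(k)})$'') no longer applies and must be replaced. The clean repair is to shift indices in Lemma 2 so that $\|{\bf u}^{(k+1)}-{\bf u}^{(k)}\|^2$ and $\|{\bf v}^{(k+1)}-{\bf v}^{(k)}\|^2$ are bounded by the current and \emph{previous} differences $\Delta_k$ and $\Delta_{k-1}$; then the potential $\mathcal{L}({\bf D}^{(k)})+A\Delta_{k-1}$ is bounded below by Assumption \ref{ass2}, decreases by at least $\kappa\Delta_k$ per step under \eqref{para-re}, and the conclusion follows by telescoping exactly as you intend. (For comparison, the paper's own passage from \eqref{des2} to \eqref{des3}, with $a_k$ built from the coefficient $\frac{\nu}{2}-\rho_1c_3-\rho_2c_1$, is terse on precisely this bookkeeping point; so your route is essentially the paper's, but the key inequality of your step 5, as written, does not follow from your step 4.)
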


\begin{proof}
With direct calculations, we can derive
\begin{align}\label{des}
&\mathcal{L}({\bf w}^{(k)}, b^{(k)}, {\bf z}^{(k)},  \bm{\xi}^{(k)}, {\bf s}^{(k)}, {\bf u}^{(k)}, {\bf v}^{(k)})\nonumber\\
&\geq\mathcal{L}({\bf w}^{(k+1)}, b^{(k+1)}, {\bf z}^{(k+1)},  \bm{\xi}^{(k+1)}, {\bf s}^{(k+1)}, {\bf u}^{(k+1)}, {\bf v}^{(k+1)}\nonumber\\
&+\frac{\nu}{2}\|{\bf D}^{(k+1)}-{\bf D}^{(k)}\|^2-\rho_1\|{\bf u}^{(k+1)}-{\bf u}^{(k)}\|^2\nonumber\\
&-\rho_2\|{\bf v}^{(k+1)}-{\bf v}^{(k)}\|^2
\end{align}

Substituting \eqref{ca1} and \eqref{ca2} to \eqref{des},
\begin{align}\label{des2}
&\mathcal{L}({\bf w}^{(k)}, b^{(k)}, {\bf z}^{(k)},  \bm{\xi}^{(k)}, {\bf s}^{(k)}, {\bf u}^{(k)}, {\bf v}^{(k)})\nonumber\\
&\geq\mathcal{L}({\bf w}^{(k+1)}, b^{(k+1)}, {\bf z}^{(k+1)},  \bm{\xi}^{(k+1)}, {\bf s}^{(k+1)}, {\bf u}^{(k+1)}, {\bf v}^{(k+1)}\nonumber\\
&+(\frac{\nu}{2}-\rho_1c_3-\rho_2c_1)\|{\bf D}^{(k+1)}-{\bf D}^{(k)}\|^2\nonumber\\
&-(\rho_2c_2+\rho_1c_4)\|{\bf D}^{(k+2)}-{\bf D}^{(k+1)}\|^2.
\end{align}
Denote $a_k:=\mathcal{L}({\bf w}^{(k)}, b^{(k)}, {\bf z}^{(k)},  \bm{\xi}^{(k)}, {\bf s}^{(k)}, {\bf u}^{(k)}, {\bf v}^{(k)})+(\frac{\nu}{2}-\rho_1c_3-\rho_2c_1)\|{\bf D}^{(k+1)}-{\bf D}^{(k)}\|^2.$
Then we can see \eqref{des2} actually indicates
\begin{align}\label{des3}
&\left[\frac{\nu}{2}-(\rho_1c_3+\rho_2c_1+\rho_2c_2+\rho_1c_4)\right]\|{\bf D}^{(k+2)}-{\bf D}^{(k+1)}\|^2\nonumber\\
&\leq a_k-a_{k+1}.
\end{align}
With Assumption \ref{ass2}, $\inf_{k}\{a_k\}>-\infty$, and then
$$\sum_{k}(a_k-a_{k+1})<+\infty.$$
Thus, we get
\begin{align}\label{des4}
&\sum_{k}\left[\frac{\nu}{2}-(\rho_1c_3+\rho_2c_1+\rho_2c_2+\rho_1c_4)\right]\|{\bf D}^{(k+2)}-{\bf D}^{(k+1)}\|^2\nonumber\\
&<+\infty.
\end{align}
That means
\begin{align}\label{vanish}
\lim_{k}\|{\bf D}^{(k+1)}-{\bf D}^{(k)}\|=0.
\end{align}
\end{proof}

For any ${\bf w}^*$ being the stationary point, there exists subsequence ${\bf w}^{k_j}\rightarrow {\bf w}^*$, with \eqref{vanish}, ${\bf w}^{k_j+1}\rightarrow {\bf w}^*$.

Now, we claim that \eqref{para-re} can be always satisfied. This is because the parameters $\rho_1$ and $\rho_2$ are set by the users. Noting   $c_3=O(\frac{1}{\rho_1^2})$, and $c_4=O(\frac{1}{\rho_1^2})$ (proved in Lemma 4),
$$\lim_{\rho_1\rightarrow+\infty,\rho_2\rightarrow 0}\rho_1c_3+\rho_2c_1+\rho_2c_2+\rho_1c_4=0.$$
Thus, for any $\beta>0$, we can choose enough large $\rho_1$ and enough small $\rho_2$ such that \eqref{para-re} is satisfied.

\section{Experimental Evaluation}\label{experimentalresults}
\subsection{Experimental Setup}
All experiments are conducted on a Windows machine with an Intel i7-7700K CPU (@4.20GHz) and 16GB memory. Binary classification tasks are performed on five LIBSVM benchmark datasets\footnote{https://www.csie.ntu.edu.tw/\~\ cjlin/libsvmtools/datasets/}: heart\_scale, mushrooms, real\_sim, news20 and rcv1.binary. These datasets are summarized in Table \ref{tab:data}.
Heart\_scale and mushrooms are small-scale datasets with small number of samples and low dimension; while real\_sim,  news20 and rcv1.binary are large-scale and high dimensional datasets. Another important feature of the last three large-scale datasets is that they enjoy very sparse structures. All datasets (except rcv1.binary) are split into a train set and a test set with 9:1 via stratified selection.

\begin{table}[!htbp]
	\renewcommand{\arraystretch}{1.2}
	\centering
	\caption{Summary of five real-world LIBSVM datasets.}
	\label{tab:data}
	\begin{tabular}{>{\centering}p{1.8cm}|>{\centering}p{1.5cm}>{\centering}p{1.5cm}>{\centering}p{1.5cm}}
		\hline
		\textbf{dataset} & \textbf{\#samples} & \textbf{\#features} & \textbf{sparsity}
		\tabularnewline
		\hline\hline
		heart\_scale & 270 & 13 & 96.24\%
		\tabularnewline
		mushrooms & 8124 & 112 &18.75\%
		\tabularnewline
		real\_sim & 72309 & 20958 &0.24\%
		\tabularnewline
		news20 & 19996 & 1355191 & 0.03\%
		\tabularnewline
		rcv1.binary & 697641 & 47236 &0.16\%
		\tabularnewline
		\hline
	\end{tabular}
\end{table}

We report the experimental results with the SCAD- and MCP-penalized SVMs.
Following methods are included in our comparison:
\begin{itemize}
\item the successive quadratic algorithm for the SCAD-penalized hinge loss function (SCAD SVM\footnote{https://faculty.franklin.uga.edu/cpark/content/software-packages}) \cite{zhang2005gene}.
\item the reweighted  $\ell_1$ scheme for the MCP-penalized squared hinge loss function (RankSVM-MCP\footnote{http://remi.flamary.com/soft/soft-ranksvm-nc.html}) \cite{laporte2014nonconvex}.
\item the generative shrinkage and thresholding (GIST\footnote{http://www.public.asu.edu/~jye02/Software/GIST/}) algorithm \cite{gong2013a}. Note that this algorithm minimizes the SCAD- and MCP-penalized squared hinge loss functions here.
\item the proposed fast and efficient ADMM algorithm (FEADMM\footnote{FEADMM includes a piece of modified C code of the GIST software for performing the update of ${\bf z}$.}) to SCAD- and MCP-penalized hinge loss functions.
\end{itemize}

The FEADMM and GIST are implemented in Matlab plus C; SCAD SVM and RankSVM-MCP are implemented in Matlab. It is worth noting that SCAD SVM is designed for the SCAD-penalized SVMs; and RankSVM-MCP covers the MCP regularizer instead of the SCAD regularizer. Therefore, the comparisons can be divided into two groups: 1) FEADMM, GIST and SCAD SVM with SCAD-penalized SVMs; 2) FEADMM, GIST and RankSVM-MCP with MCP-penalized SVMs. For performance metrics, we evaluate all methods by measuring the running time, the number of iterations and the prediction accuracy.

 In terms of parameters setting, zero vectors are chosen as the starting point of ${\bf w}$ for all the evaluated methods (except SCAD SVM). We set $\theta=3.7$ for the SCAD penalty and $\theta=3$ for the MCP penalty as suggested in the literature.  
 The tunning parameter $\lambda$ for GIST, SCAD SVM, RankSVM-MCP is chosen from the set \{$2^{-18}$,\ldots,$2^{4}$\} by five-fold cross validation. For FEADMM, we empirically set ${\lambda=2^{-6}}$; and  $\rho_1$ and $\rho_2$ are chosen by a grid search over \{0.01, 0.1, 1, 1.5, 5, 10\}.

In our experiments, the terminate condition of FEADMM is designed by measuring the change of objective value between consecutive iterations. We define the relative change of the objective value as $\epsilon=|\frac{\textnormal{obj}^{(k+1)}-\textnormal{obj}^{(k)}}{\textnormal{obj}^{(k)}}|$ where $\textnormal{obj}^{(k)}=\frac{1}{n}{\bf 1}^\top{\bm \xi}^{(k)}+P({\bf z}^{(k)})$. FEADMM is terminated when $\epsilon < 10^{-4}$ or the number of iterations exceeds 1000.

\subsection{Simulation and Discussion}
\subsubsection{Comparison with other methods}
We report the experimental results to demonstrate the efficiency of FEADMM. Experimental results on the two small-scale datasets are presented in Fig.\ref{small}; and experimental results on the three large-scale datasets are shown in Fig.\ref{large}. In all the figures, the $x$-axes all denote the CPU time (in seconds); and the $y$-axes all denote the prediction accuracy. Corresponding results are summarized in Table \ref{tab:small} and Table \ref{tab:large}, respectively. Note that SCAD SVM and RankSVM-MCP don't appear in Fig.\ref{large} and Table \ref{tab:large} because we find that they are unable to handle the large-scale datasets. For the evaluation on the small-scale datasets, comparisons of FEADMM with GIST and SCAD on SCAD-penalized SVMs are shown in Fig.\ref{heart_scale_scad} and Fig.\ref{mushrooms_scad}; Comparisons of FEADMM with GIST and RankSVM-MCP on MCP-penalized SVMs are shown in Fig.\ref{heart_scale_mcp} and Fig.\ref{mushrooms_mcp}.

\begin{figure}[!htbp]
	\centering
	\subfigure[heart\_scale SCAD]{\includegraphics[width=0.23\textwidth, height=3.8cm]{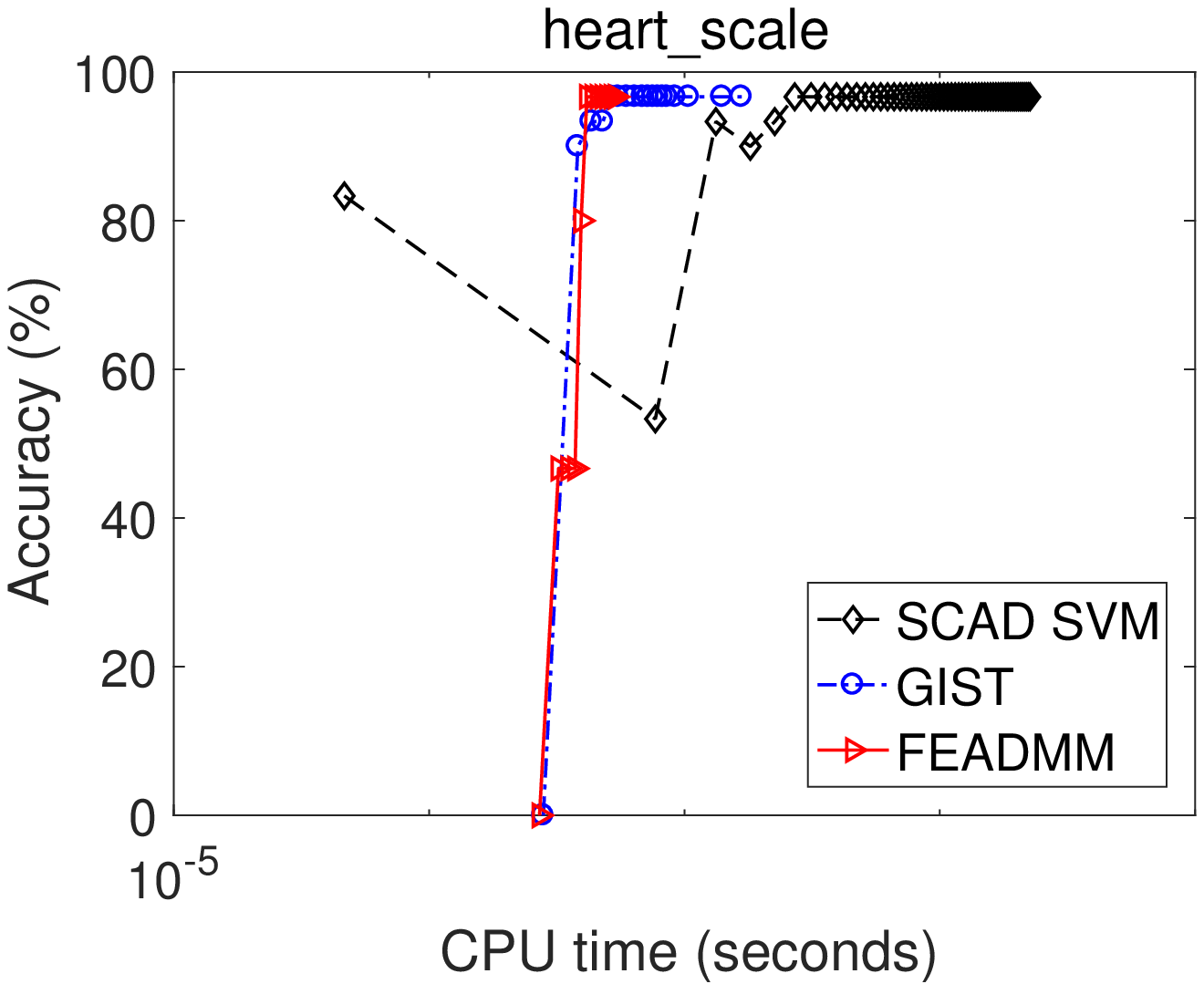}\label{heart_scale_scad}}
    \subfigure[heart\_scale MCP]{\includegraphics[width=0.23\textwidth, height=3.8cm]{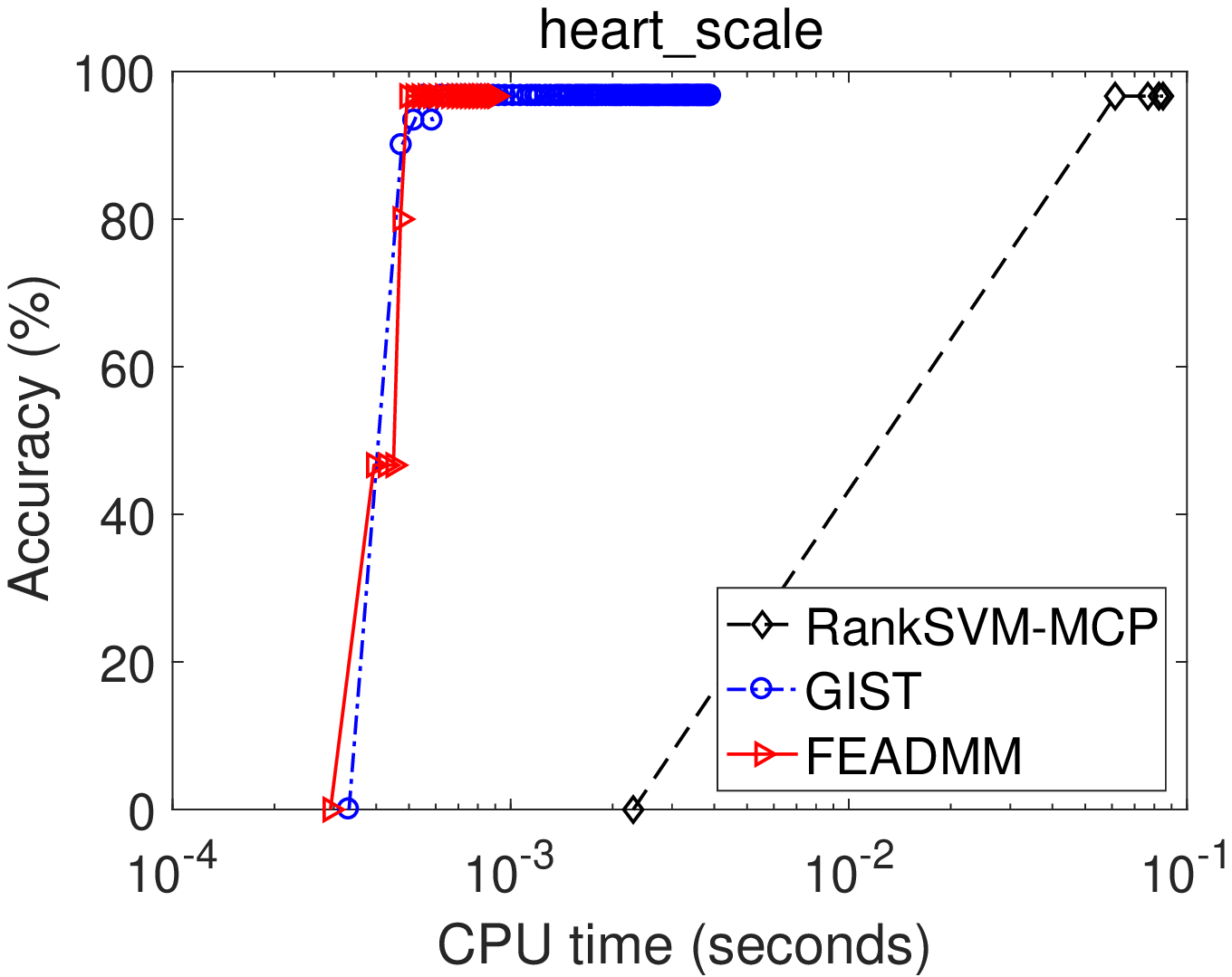}\label{heart_scale_mcp}}
	\subfigure[mushrooms SCAD]{\includegraphics[width=0.23\textwidth, height=3.8cm]{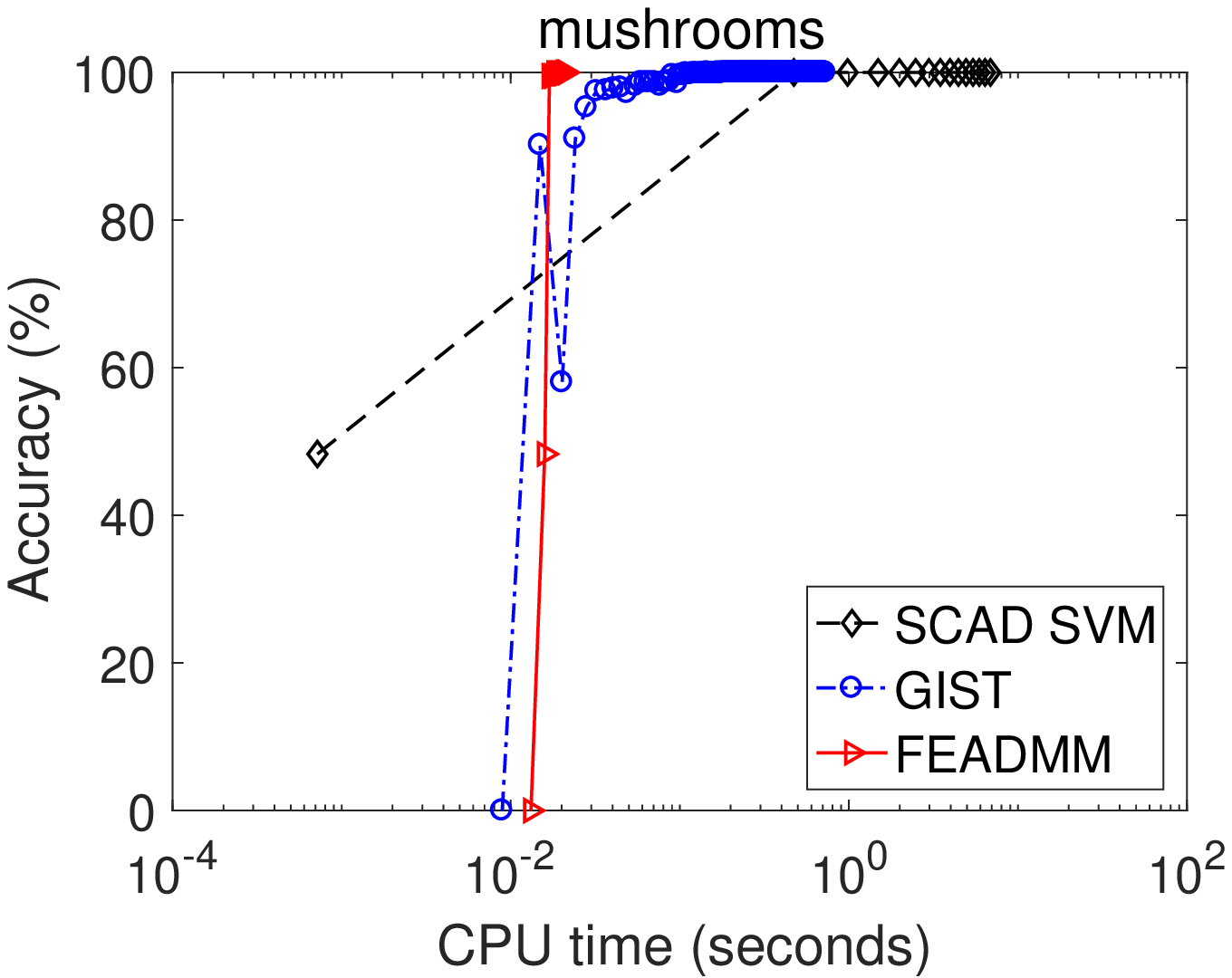}\label{mushrooms_scad}}
	\subfigure[mushrooms MCP]{\includegraphics[width=0.23\textwidth, height=3.8cm]{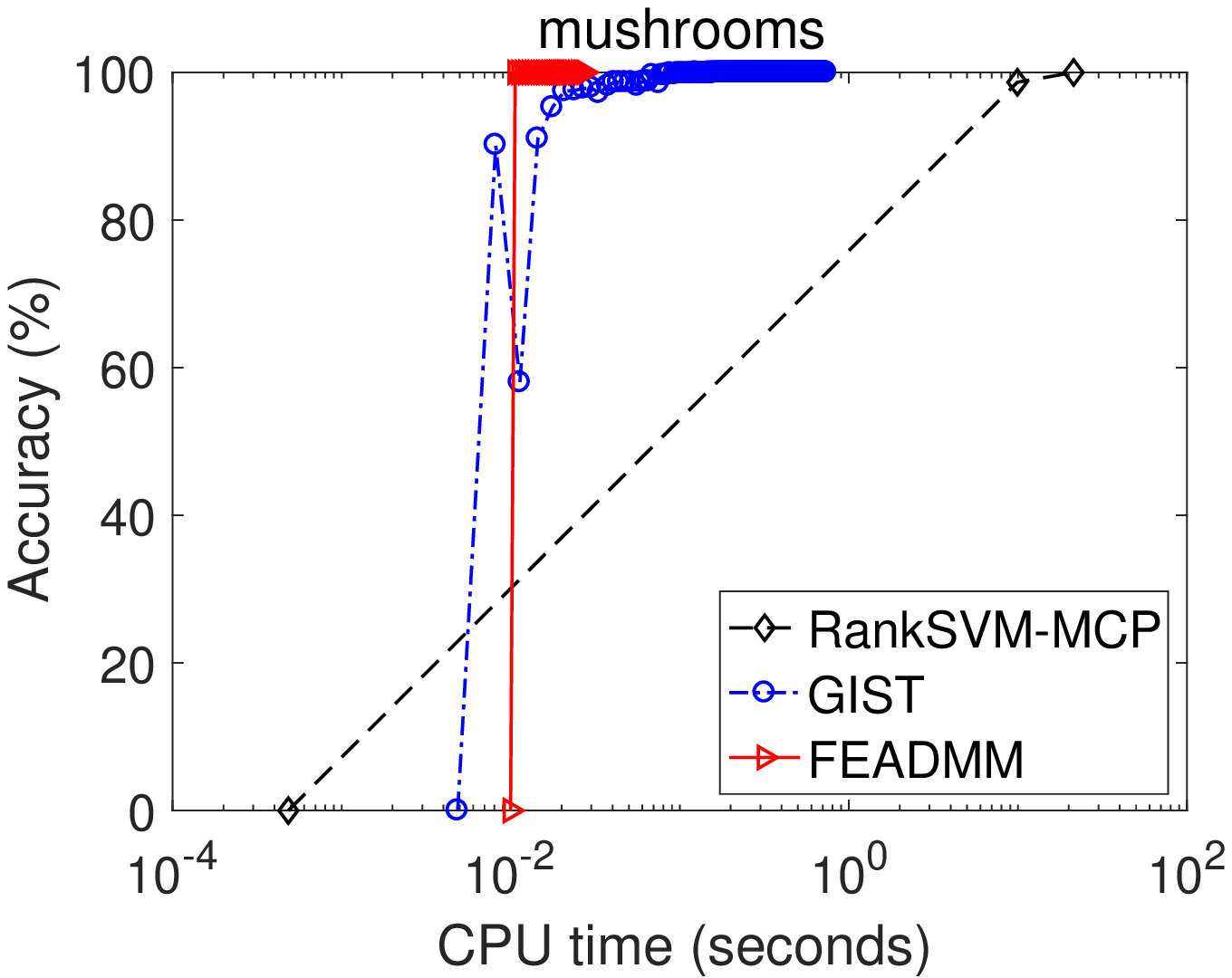}\label{mushrooms_mcp}}
	\caption{Comparison of FEADMM with three existing methods on the small-scale datasets. Prediction accuracy vs CPU time (in seconds) with SCAD penalty and MCP is shown in the left column and right column, respectively. The red solid lines stand for the FEADMM; the blue dashed lines stand for the GIST; the black dashed lines stand for the SCAD SVM in in Fig.\ref{heart_scale_scad} and Fig.\ref{mushrooms_scad} and RandSVM-MCP in Fig.\ref{heart_scale_mcp} and Fig.\ref{mushrooms_mcp}.}
	\label{small}
\end{figure}

\renewcommand{\multirowsetup}{\centering}
{\begin{table*}[!htbp]
		\renewcommand{\arraystretch}{1.2}
		\centering
		\caption{Comparison of FEADMM with three existing methods for the SCAD- and MCP-penalized SVMs on the small-scale datasets. The best results are highlighted in boldface.}
		\label{tab:small}
		\begin{tabular}
			 {>{\centering}p{1.5cm}>{\centering}p{1cm}>{\centering}p{2cm}|>{\centering}p{1.5cm}>{\centering}p{2.5cm}>{\centering}p{2cm}>{\centering}p{1.5cm}>{\centering}p{1.5cm}}
			\hline
			\textbf{dataset} & \textbf{penalty}& \textbf{method} & \textbf{\#iteration} &  \textbf{pre-computation time}  & \textbf{iteration time}& \textbf{running time} & \textbf{accuracy}
			\tabularnewline
			\hline
			\hline
			\multirow{6}{1.5cm}{heart\_scale}&\multirow{3}{1cm}{SCAD}
			&SCAD SVM  & 58 & $\approx$ 0 & 0.023 & 0.023  &\textbf{96.67}\%
			\tabularnewline
			& &GIST  &18 & 2.80e-4 & 0.001 & 0.002  &\textbf{96.67}\%
			\tabularnewline
			& &FEADMM  &\textbf{12}& 2.71e-4 & 2.88e-4 &\textbf{5.59e-4}  & \textbf{96.67}\%
			\tabularnewline
			\cline{2-8}
			&\multirow{3}{1cm}{MCP}
			&RankSVM-MCP  & \textbf{4} & $\approx$ 0 & 0.024  & 0.024 &\textbf{96.67}\%
			\tabularnewline
			& &GIST  &87 & 3.25e-4& 0.004 &  0.004 &\textbf{96.67}\%
			\tabularnewline
			&	&FEADMM  &24 &2.93e-4  & 6.27e-4 &\textbf{0.001} & \textbf{96.67}\%
			\tabularnewline
			\hline
			\multirow{6}{1.5cm}{mushrooms}&\multirow{3}{1cm}{SCAD}
			& SCAD SVM & 15& $\approx$ 0 & 6.879 & 6.879  & \textbf{100}\%
			\tabularnewline
			& & GIST  &194 & 0.009 & 0.718&0.726  &\textbf{100}\%
			\tabularnewline
			& & FEADMM  &\textbf{11} & 0.013 & 0.008&\textbf{0.022}  &\textbf{100}\%
			\tabularnewline
			\cline{2-8}
			&\multirow{3}{1cm}{MCP}
			& RankSVM-MCP & \textbf{2} &$\approx$ 0 &21.39 & 21.39  & \textbf{100}\%
			\tabularnewline
			& & GIST  &205 & 0.005& 0.730 &0.735  &\textbf{100}\%
			\tabularnewline
			& & FEADMM  &28 & 0.01& 0.02 &\textbf{0.03} &\textbf{100}\%
			\tabularnewline
			\hline
		\end{tabular}
	\end{table*}
}

\textbf{Running time and convergence.}
From the observation of Fig.\ref{small} and Fig.\ref{large}, we can reach to following conclusions: First, it is clear that FEADMM runs fast and can always converge within only a few tens of iterations for any dataset we evaluated. Second, with SCAD-penalized SVMs, SCAD SVM is inferior to FEADMM and GIST in terms of running time.
Third, with MCP-penalized SVMs, RankSVM-MCP performs worst in terms of total running time despite that it needs the minimum number of iterations. Fourth, both SCAD SVM and RankSVM-MCP are limited to the processing of small-scale datasets.
Fifth, comparing FEADMM with GIST, we can see that the number of iterations of FEADMM is consistently much less than that of GIST. In the aspect of running time, FEADMM outperforms GIST in most cases. Iteration numbers and running time statistics in Table \ref{tab:small} and Table \ref{tab:large} validate this point. For the evalutations on small-scale datasets, FEADMM takes much less running time than GIST. For evaluations on large-scale datasets, FEADMM only takes more running time than GIST on the rcv1.binary dataset whose training samples is much larger than its dimension. However, FEASMM runs faster than GIST on the news20 datset with extremely high dimension. This demonstrates the superiority of FEADMM in handling high dimensional datasets.



\textbf{Prediction accuracy.} Fig.\ref{small} and Fig.\ref{large} show that the prediction accuracy of each method increases along with the CPU time. Specially, in all the figures, the trend of the red solid lines is always almost a straight line up. This again demonstrates the fast convergence rate of FEADMM; FEADMM can quickly attain a high prediction accuracy.
In addition, Table \ref{tab:small} shows that the four evaluated methods attain the same prediction accuracy, which demonstrates that these methods are comparable in terms of prediction accuracy on the small-scale datasets. On the other hand, Table \ref{tab:large} shows that FEADMM performs slightly better than GIST on the large-scale datasets in the aspect of prediction accuracy. Moreover, from Table \ref{tab:small} and \ref{tab:large} we find that FEADMM attains comparable prediction accuracy with SCAD- and MCP-penalized SVMs. The discussions above demonstrate that FEADMM enjoys fast execution speed as well as strong generalization ability when solving the SCAD- and MCP-penalized SVMs.

\renewcommand{\multirowsetup}{\centering}
{\begin{table*}[!htbp]
		\renewcommand{\arraystretch}{1.2}
		\centering
		\caption{Comparison of FEADMM with GIST for SCAD- and MCP-penalized SVMs on the large-scale datasets. The best results are highlighted in boldface.}
		\label{tab:large}
		\begin{tabular}
			 {>{\centering}p{1.5cm}>{\centering}p{1cm}>{\centering}p{2cm}|>{\centering}p{1.5cm}>{\centering}p{2.5cm}>{\centering}p{2cm}>{\centering}p{1.5cm}>{\centering}p{1.5cm}}
			\hline
			\textbf{dataset} & \textbf{penalty}& \textbf{method} & \textbf{\#iteration} &  \textbf{pre-computation time}  & \textbf{iteration time}& \textbf{running time} & \textbf{accuracy}
			\tabularnewline
			\hline
			\hline
			\multirow{4}{1.5cm}{real\_sim}&
			\multirow{2}{1cm}{SCAD}
			&  GIST  &1000+& 0.13& 120.51 & 120.64  &96.17\%
			\tabularnewline
			& &  FEADMM  &\textbf{35} & 28.00 & 14.17 &\textbf{42.17}  & \textbf{97.42}\%
			\tabularnewline
			\cline{2-8}
			&\multirow{2}{1cm}{MCP}
			&  GIST  &1000+ & 0.35& 119.33 &119.68& 96.17\%
			\tabularnewline
			& &  FEADMM  &\textbf{12} &27.78 &5.46 &\textbf{33.24}  & \textbf{97.40}\%
			\tabularnewline
			\hline
			\multirow{4}{1.5cm}{rcv1.binary} &
			\multirow{2}{1cm}{SCAD}
			& GIST  &1000+ & 0.07 & 49.49& \textbf{49.56}  &95.08\%
			\tabularnewline
			& & FEADMM  &\textbf{27}& 50.52 & 9.68  &60.20 & \textbf{96.02}\%
			\tabularnewline
			\cline{2-8}
			&\multirow{2}{1cm}{MCP}
			& GIST  &1000+ & 0.05&48.17 & \textbf{48.22}  &94.95\%
			\tabularnewline
			& & FEADMM  &\textbf{11} & 51.49 & 3.89&55.38 & \textbf{96.02}\%
			\tabularnewline
			\hline
			\multirow{4}{1.5cm}{news20} &
			\multirow{2}{1cm}{SCAD}
			& GIST  &209 & 0.31 & 67.07& 67.38  & 95.59\%
			\tabularnewline
			& & FEADMM  &\textbf{16} & 52.62 & 6.09&\textbf{58.71} & \textbf{95.84}\%
			\tabularnewline
			\cline{2-8}
			&\multirow{2}{1cm}{MCP}
			& GIST  &225 & 0.29 &73.65 & 73.94 & 95.94\%
			\tabularnewline
			& & FEADMM  &\textbf{23}  & 52.60& 8.69 &\textbf{61.29}& \textbf{96.19}\%
			\tabularnewline
			
			\hline
		\end{tabular}
	\end{table*}
}
\begin{figure}[!htbp]
	\centering
	\subfigure[real\_sim SCAD]{\includegraphics[width=0.23\textwidth, height=3.8cm]{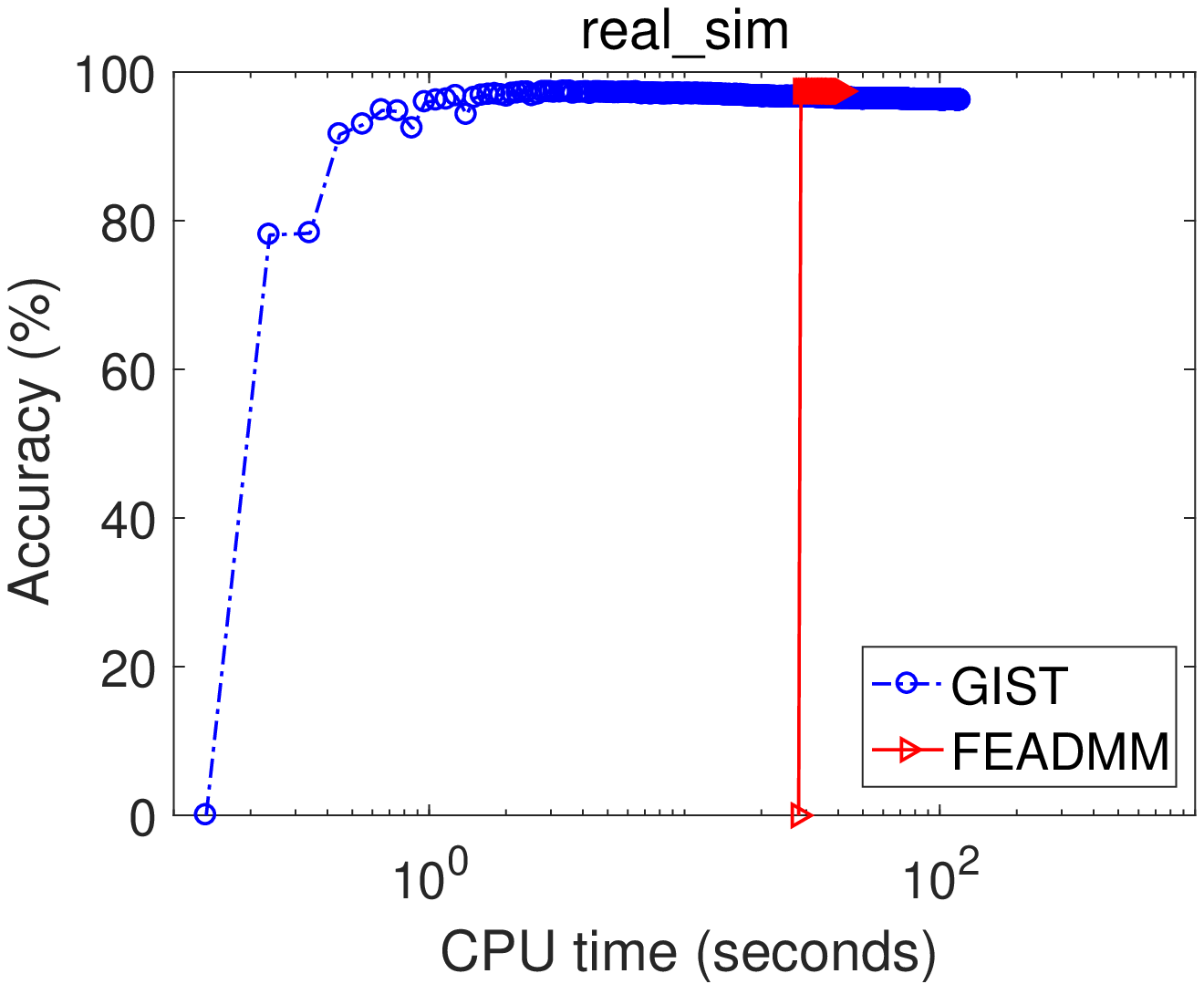}\label{real_sim_scad}}
    \subfigure[real\_sim MCP]{\includegraphics[width=0.23\textwidth, height=3.8cm]{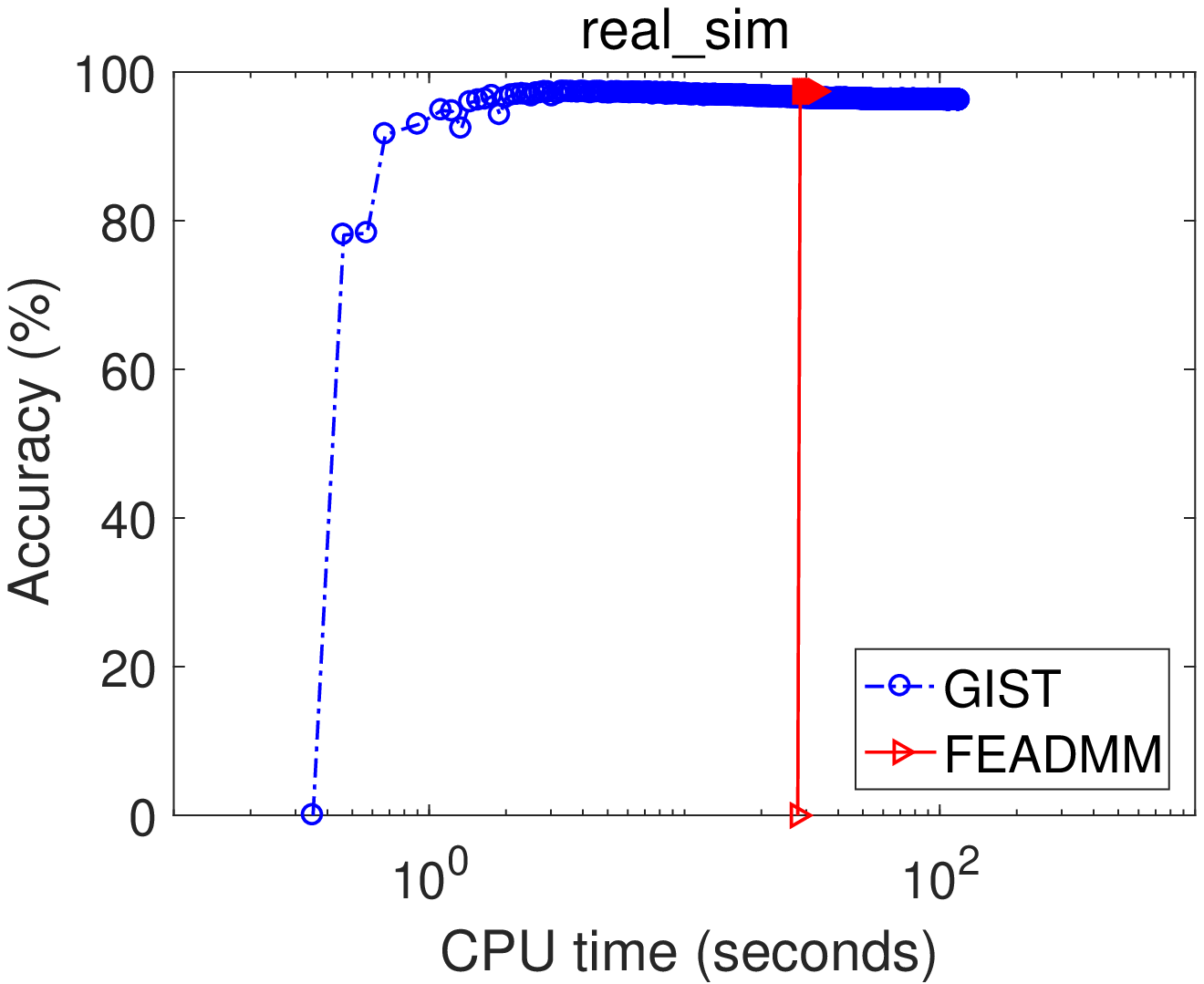}\label{real_sim_mcp}}
    \subfigure[news20 SCAD]{\includegraphics[width=0.23\textwidth, height=3.8cm]{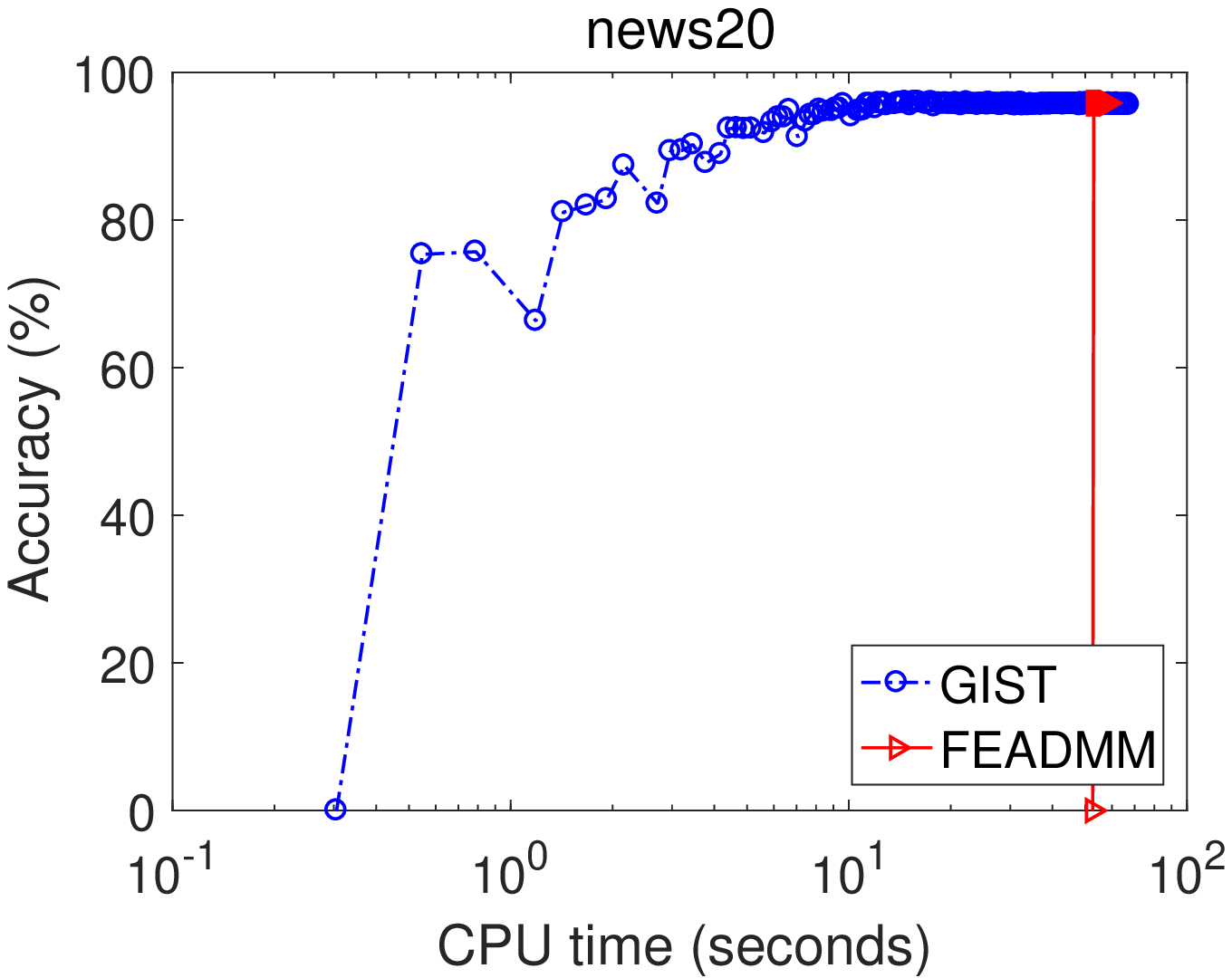}\label{news20_scad}}
	\subfigure[news20 MCP]{\includegraphics[width=0.23\textwidth, height=3.8cm]{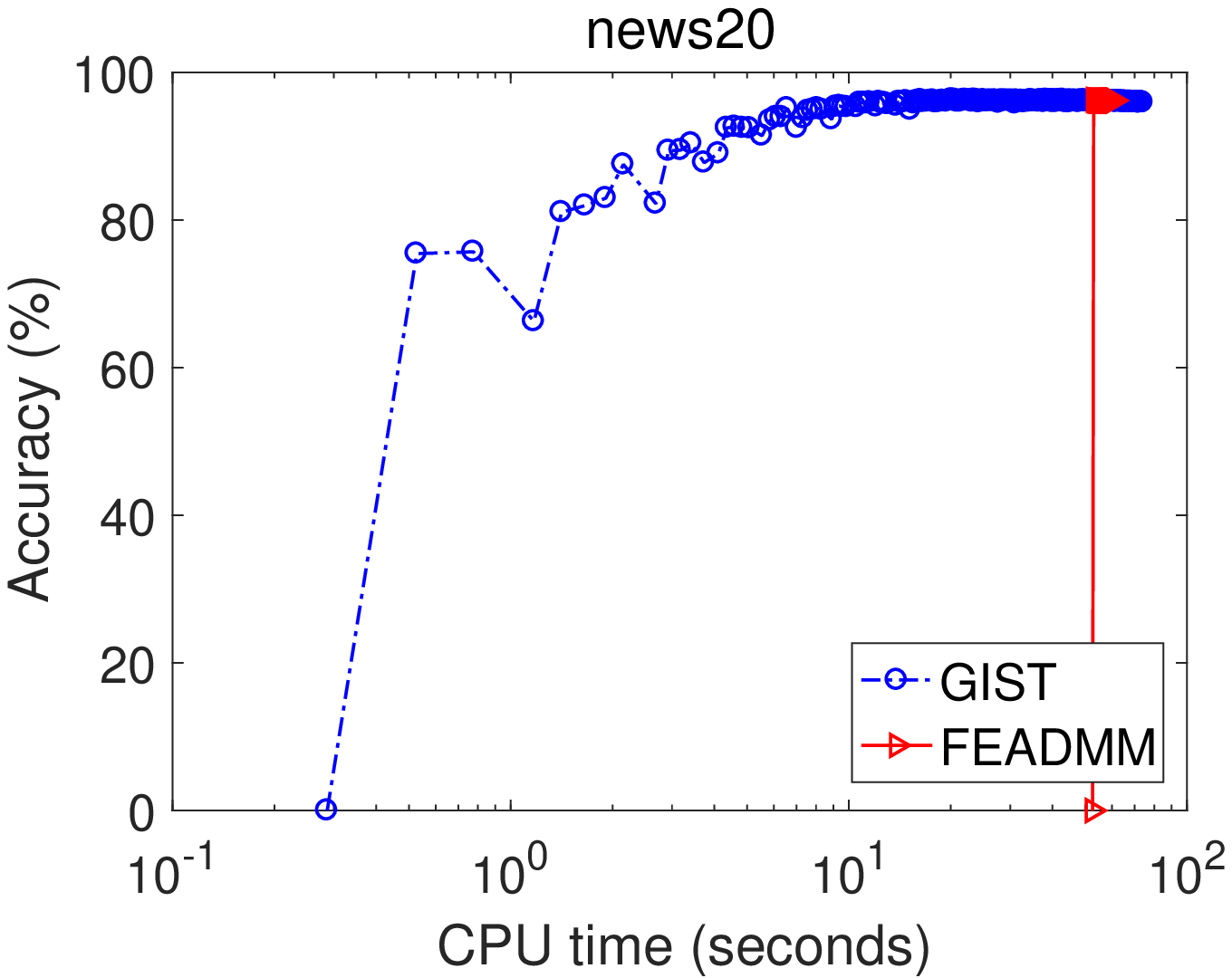}\label{news20_mcp}}
	\subfigure[rcv1.binary SCAD]{\includegraphics[width=0.23\textwidth, height=3.8cm]{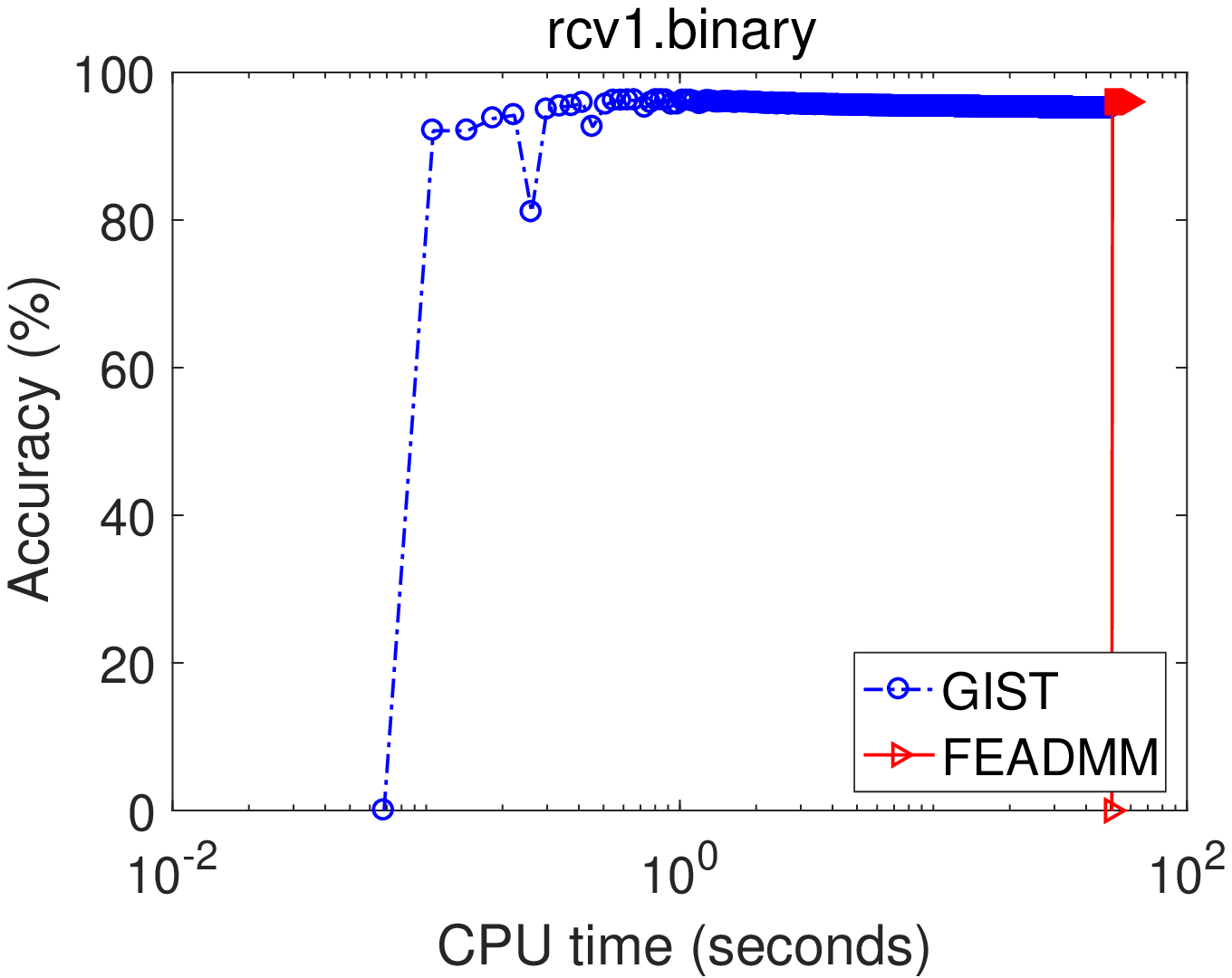}\label{rcv1_scad}}
    \subfigure[rcv1.binary MCP]{\includegraphics[width=0.23\textwidth, height=3.8cm]{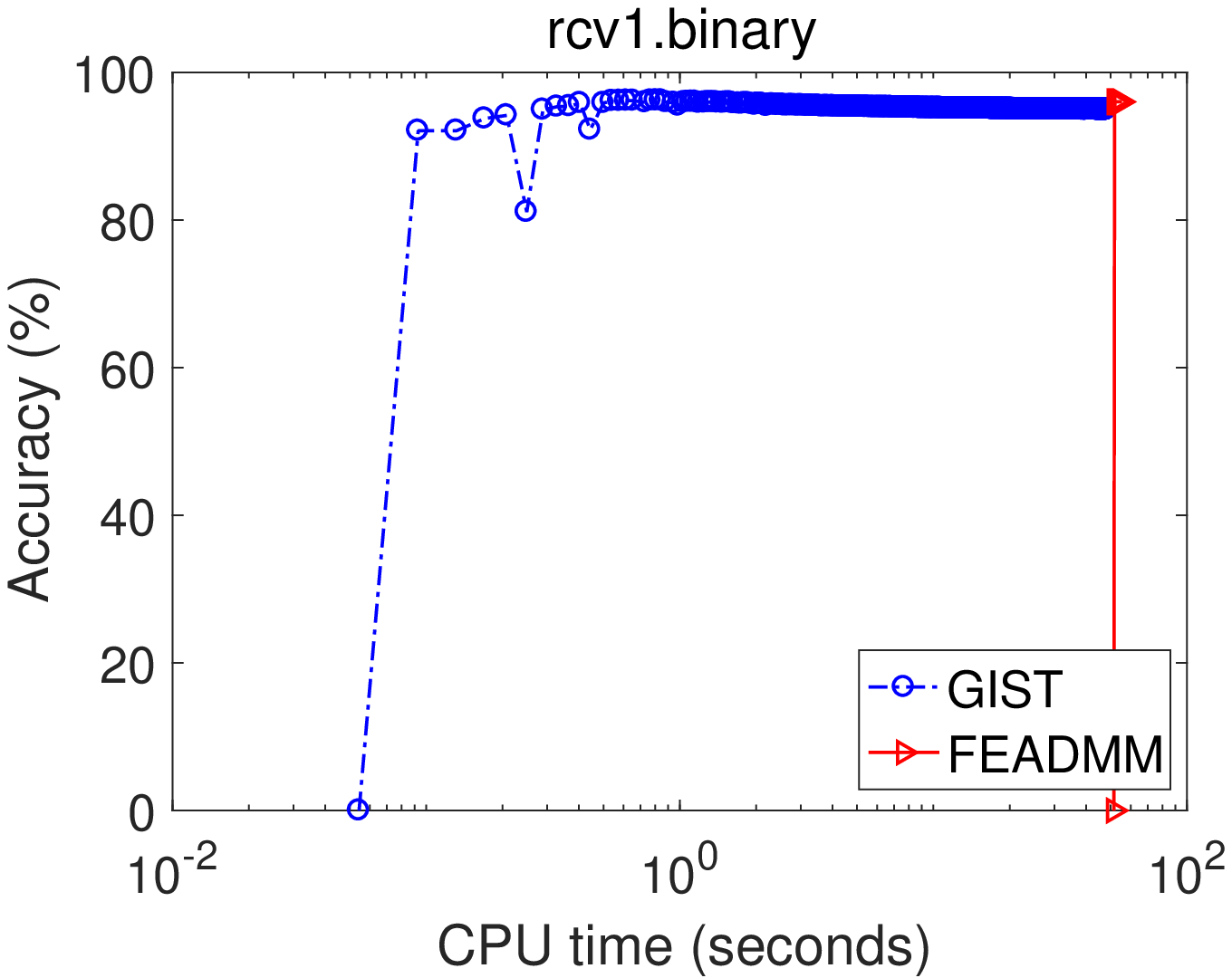}\label{rcv1_mcp}}
	\caption{Comparison of FEADMM with GIST for SCAD- and MCP-penalized SVMs on the large-scale datasets. Prediction accuracy vs CPU time (in seconds) with SCAD penalty and MCP is shown in the left column and right column, respectively. The red solid lines stand for the FEADMM; the blue dashed lines stand for the GIST.}
	\label{large}
\end{figure}

\subsubsection{Computational burden analysis} Based on the observations of Fig.\ref{small} and Fig.\ref{large}, we find that in all the figures it always takes some time before the curve of prediction accuracy of each method begins to go up. In fact, all the evaluated methods need to do some pre-computations before the iteration starts. In order to analyze the main computational burden of each method, we split the total running time of each method into two parts: the pre-computation time costing at the pre-computation stage and the iteration time costing at the iteration stage. Table \ref{tab:small} and Table \ref{tab:large} summarize the pre-computation and iteration time of each method on the small-scale and large-scale datasets, respectively.
From Table \ref{tab:small}, we see that SCAD SVM and RankSVM-MCP spend litte time at the pre-compuatation stage. The pre-computation time of GIST is much shorter than its iteration time. In contrast, the pre-computation time of FEADMM is almost close to its iteration time. This means that when evaluated on the small-scale datasets, the main computational burden of the three existing methods lies in their iteration stages; while for FEADMM, both the pre-computation and iteration procedure play an important role in the total running time. Comparing Table \ref{tab:large} with Table \ref{tab:small}, we see that both the pre-computation time of FEADMM and GIST increases when evaluated on large-scale datasets. Meanwhile, Table \ref{tab:large} illustrates that GIST spends much less time at the pre-compuatation stage yet. In contrast, we find that the pre-computation time of FEADMM exceeds its iteration time a lot. Therefore, it is clear that the main burden of FEADMM lies in the pre-computation stage, which occupies a large percentage on the large-scale datasets. This result verifies the computational analysis in Section \ref{algorimanalysis}.



\section{Conclusions}\label{conclusions}
In order to solve the nonconvex penalized SVMs, this paper proposed an efficient algorithm based on the framework of ADMM.  We design a novel mechanism that updates ${\bf w}$ according to the values of the number of training data ($n$) and the dimension of the training data ($d$), which gives rise to much lower computational cost.
Moreover, the burden of the algorithm has been transferred to the outside of the iterations.
We detailedly studied the computational complexity and the convergence of the proposed algorithm.
The extensive experimental evaluations demonstrate that the proposed algorithm outperforms other three state-of-the-art methods
in terms of running time and prediction accuracy. In special, this paper actually proposes a general framework to SVMs with sparsity-inducing regularizations. SVMs with other sparsity-inducing regularizations can be efficiently solved by applying the proposed algorithm as long as Equation (\ref{eq14}) admits a solution. For future work, we will further explore to incorporate other sparsity-including penalties, like $\ell_p$ penalty with $0<p<1$ and the elastic net penalty \cite{Ye2011Efficient,liu2013iterative}, into the proposed framework.

\begin{appendix}
	\section{Closed-form Solution of problem (\ref{eq14})}
	\label{zupdate}
	Here we present the closed-form solution of problem (\ref{scadbefore}) for LSP, SCAD penalty, MCP and capped  $\ell_1$ penalty. All results are obtained by applying the conclusions drawn in \cite{gong2013a}. Here, $\psi_{i}^{(k)}$ indicates the $i$th entry of vector ${\bm \psi}$ in iteration $k$.
	
	\begin{enumerate}[(I)]
		\item \textbf{LSP:}
		$z_i^{(k+1)}=\text{sign}(\psi_i^{(k+1)}x)$, where
		$x=\argmin_{z_i\in\mathcal{C}} \  \frac{1}{2}(z_i-|\psi_i^{(k+1)}|)^2+\frac{\lambda}{\rho_1}\text{log}(1+z_i/\theta)$ and $\mathcal{C}$ is a set composed of $3$ elements or $1$ element.
		
		If $\rho_1^2(|\psi_i^{(k+1)}|-\theta)^2-4\rho_1(\lambda-\rho_1|\psi_i^{(k+1)}|\theta)\gneq0$,
		
		$\mathcal{C}=\{0, \\
		 \lbrack\frac{\rho_1(|\psi_i^{(k+1)}|-\theta)+\sqrt{\rho_1^2(|\psi_i^{(k+1)}|-\theta)^2-4\rho_1(\lambda-\rho_1|\psi_i^{(k+1)}|\theta)}}{2\rho_1}\rbrack_+ \\
		 \lbrack\frac{\rho_1(|\psi_i^{(k+1)}|-\theta)-\sqrt{\rho_1^2(|\psi_i^{(k+1)}|-\theta)^2-4\rho_1(\lambda-\rho_1|\psi_i^{(k+1)}|\theta)}}{2\rho_1}\rbrack_+
		\}$. \\
		Otherwise, 	$\mathcal{C}=\{0\}$.
		
		\item \textbf{SCAD:}
		Consider that $\theta>2$ and let $x_1=
		\text{sign}(\psi_i^{(k+1)})\min(\lambda, \max(0,|\psi_i^{(k+1)}|-\lambda/\rho_1))\ \ s.t. \ \ |z_i^{(k)}|\leq\lambda$,
		$
		 x_2=\text{sign}(\psi_i^{(k+1)})\min(\theta\lambda,\max(\lambda,\frac{\rho_1|\psi_i^{(k+1)}|(\theta-1)-\theta\lambda}{\rho_1(\theta-2)})  \ \ s.t. \\  \lambda<|z_i^{(k)}|\leq\theta\lambda
		$,
		$
		x_3=\text{sign}(\psi_i^{(k+1)})\max(\theta\lambda,|\psi_i^{(k+1)}|) \ \ s.t. \ \ |z_i^{(k)}|>\theta\lambda
		$.
		Thus we have
		$z_i^{(k+1)}=\argmin_{m}h_i(m) \ \ s.t. \ \ m\in\{x_1,x_2,x_3\}$, where $h_i(m)=\frac{1}{2}(m-\psi_i^{(k+1)})^2+\frac{1}{\rho_1}p_\lambda(m)
		$ and $p_\lambda(m)$ refers to the SCAD regularizer in Table \ref{Tab:Penalty}.

		\item \textbf{MCP:}
		Let $x_1=\text{sign}(\psi_i^{(k+1)})m$ and $x_2=\text{sign}(\psi_i^{(k+1)})\max(\theta\lambda,|\psi_i^{(k+1)}|)$ where $m=\argmin_{z_i\in\mathcal{C}}\frac{1}{2}(z_i-|\psi_i^{(k+1)}|)^2+\frac{\lambda}{\rho_1}z_i-\frac{z_i^2}{2\theta}$.
		Here, $\mathcal{C}=\{0,\theta\lambda,\min(\theta\lambda,\max(0,\frac{\theta(\rho_1|\psi_i^{(k+1)}|-\lambda)}{\rho_1(\theta-1)}))
		\}$, if $\theta-1\neq0$, and $\mathcal{C}=\{0,\theta\lambda\}$ otherwise.
		\\
		Then we have
		$z_i^{(k+1)}$ $=\left\{\begin{array}{ll}
		x_1, & \text{if} \ h_i(x_1)\leq h_i(x_2), \\
		x_2, & \text{otherwise}. \\
		\end{array}\right.$
		Here
		$h_i(m)=\frac{1}{2}(m-\psi_i^{(k+1)})^2+\frac{1}{\rho_1}p_\lambda(m)$ and $p_\lambda(m)$ refers to the MCP regularizer in Table \ref{Tab:Penalty}.
		
		\item \textbf{Capped $\ell_1$:}
		Let $x_1=\text{sign}(\psi_{i}^{(k+1)})\max(\theta,|\psi_{i}^{(k+1)}|)$ \ \ s.t. \ \ $|z_i^{(k)}| \geq \theta$,
		$x_2=\text{sign}(\psi_{i}^{(k+1)})\min(\theta,\max(0,|\psi_{i}^{(k+1)}|-\lambda/\rho_1))$ \ \ s.t. \ \ $|z_i^{(k)}| \leq \theta$.
		\\  Then we have $z_i^{(k+1)}$ $=\left\{\begin{array}{ll}
		x_1, & \text{if} \ h_i(x_1)\leq h_i(x_2), \\
		x_2, & \text{otherwise}. \\
		\end{array}\right.$ Here
		$h_i(m)=\frac{1}{2}(m-\psi_{i}^{(k+1)})^2+\frac{1}{\rho_1}p_\lambda(m)$ and $p_\lambda(m)$ indicates the Capped $\ell_1$ regularizer in Table \ref{Tab:Penalty}.

	\end{enumerate}
	
\end{appendix}


\end{document}